\documentclass{article} 
\usepackage{iclr2026_conference,times}

\usepackage{amsmath,amsfonts,bm}

\def\eqref#1{equation~\ref{#1}}

\def\1{\bm{1}}

\DeclareMathAlphabet{\mathsfit}{\encodingdefault}{\sfdefault}{m}{sl}
\SetMathAlphabet{\mathsfit}{bold}{\encodingdefault}{\sfdefault}{bx}{n}

\newcommand{\normltwo}[1]{\left\lVert #1 \right\rVert}
\newcommand{\norm}[1]{\left\lVert #1 \right\rVert}

\newcommand{\normltwosq}[1]{\left\lVert #1 \right\rVert^{2}}
\newcommand{\halfbilin}[3]{\tfrac{1}{2}\,#2^{T}\,#1\,#3}

\usepackage[utf8]{inputenc} 
\usepackage[T1]{fontenc}    
\usepackage[pagebackref=true,breaklinks=true,colorlinks,citecolor=gray,linkcolor=RoyalBlue]{hyperref}
\usepackage{url}            
\usepackage{booktabs}       
\usepackage{amsfonts}       
\usepackage{nicefrac}       
\usepackage{microtype}      
\usepackage[dvipsnames]{xcolor}         

\usepackage{microtype}
\usepackage{graphicx}
\usepackage{subfigure}
\usepackage{booktabs} 
\usepackage{natbib}

\usepackage{amsmath}
\usepackage{amssymb}
\usepackage{mathtools}
\usepackage{amsthm}
\usepackage{float}
\usepackage{xcolor}
\usepackage{xspace}
\usepackage{makecell}
\usepackage{wrapfig}
\usepackage{pifont}

\usepackage{amsthm}

\usepackage{thmtools,thm-restate}

\usepackage[capitalize,noabbrev]{cleveref}

\definecolor{highlight}{rgb}{0,0,0}

\usepackage{makecell}
\usepackage{multirow}
\usepackage{multicol}
\usepackage{setspace}

\usepackage{enumitem}
\renewcommand{\paragraph}[1]{\noindent\textbf{#1}}

\definecolor{commentgray}{RGB}{119,119,119}

\newcommand{\valstd}[2]{$#1${\tiny $\pm #2$}}

\definecolor{salmon}{rgb}{1.0, 0.55, 0.41}
\definecolor{darksalmon}{rgb}{0.91, 0.59, 0.48}
\definecolor{darkpink}{rgb}{0.91, 0.33, 0.5}

\usepackage{caption}

\usepackage[textsize=tiny]{todonotes}
\usepackage{algorithm}
\usepackage{algpseudocode}
\usepackage{adjustbox}

\usepackage{etoc}
\usepackage{wrapfig}
\algrenewcommand{\algorithmicindent}{0.7em}

\makeatletter
\newcommand{\printfnsymbolgatech}[1]{{$^{1,{\@fnsymbol{#1}}}$}}
\makeatother

\title{Formatting Instructions for ICLR 2026 \\ Conference Submissions}

\author{%
Yixin Zhang$^{1}$\thanks{Correspondence to: \texttt{yzhang4179@gatech.edu}},\,
Yunhao Luo$^{1,2}$,\,
Utkarsh A. Mishra$^{1}$,\,
Woo Chul Shin$^{1}$, \\%
\textbf{Yongxin Chen$^{1}$},\,
\textbf{Danfei Xu$^{1}$} \\%
$^{1}$Georgia Institute of Technology \\%
$^{2}$University of Michigan \\
}

\iclrfinalcopy 
\title{Compositional Visual Planning via Inference-Time Diffusion Scaling}

\begin{document}

\maketitle

\vspace{-2em}
\begin{abstract}
 Diffusion models excel at short-horizon robot planning, yet scaling them to long-horizon tasks remains challenging due to computational constraints and limited training data. 
Existing compositional approaches stitch together short segments by separately denoising each component and averaging overlapping regions. However, this suffers from instability as the factorization assumption breaks down in noisy data space, leading to inconsistent global plans. We propose that the key to stable compositional generation lies in enforcing boundary agreement on the estimated clean data (Tweedie estimates) rather than on noisy intermediate states. Our method formulates long-horizon planning as inference over a chain-structured factor graph of overlapping video chunks, where pretrained short-horizon video diffusion models provide local priors. At inference time, we enforce boundary agreement through a novel combination of synchronous and asynchronous message passing that operates on Tweedie estimates, producing globally consistent guidance without requiring additional training. Our training-free framework demonstrates significant improvements over existing baselines, effectively generalizing to unseen start-goal combinations that were not present in the original training data. \noindent Project website: \url{https://comp-visual-planning.github.io/}
\end{abstract}

\section{Introduction}\label{sec:intro}
\begin{figure}[htbp]
  \centering
  \includegraphics[width=1.0\linewidth]{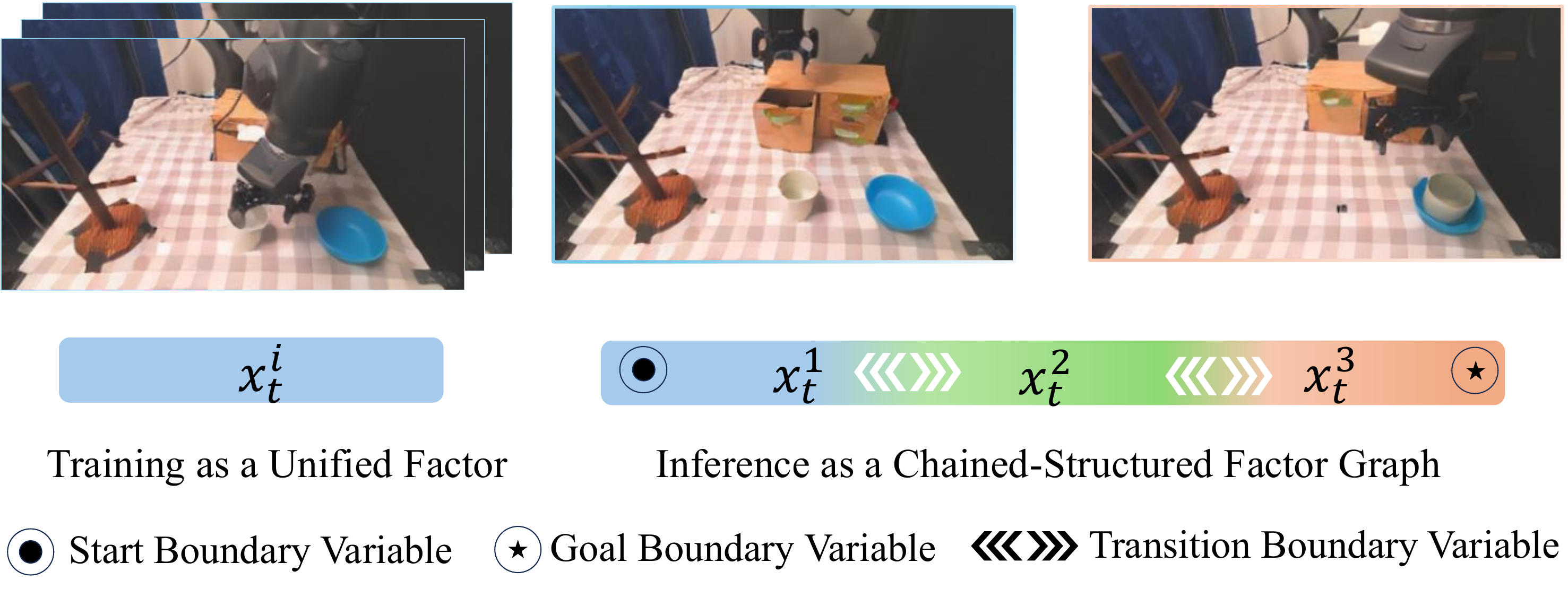}
  \vspace{-2em}
  \caption{\textbf{Compositional Visual Planning via Inference Time Diffuser Scaling.} We train a short-horizon visual diffusion model on clips treated as a single factor. At inference, we scale visual planning horizon without retraining by chaining overlapping factors into a linear factor graph: the start and goal boundary variables are anchored at the ends, while neighboring factors exchange information through shared transition boundary variables.}
  \label{fig:example}
\end{figure}

Generative diffusion models have shown strong capacity for modeling complex, high-dimensional distributions over images, videos, and robot plans. In planning, they offer a compelling alternative to per-instance optimization: instead of solving a new search problem for every start–goal pair, we can sample likely solutions from a learned generator. However, extending video-based planning to long horizons remains challenging: most backbones are trained on short clips, compute and memory scale unfavorably with sequence length, and long-range constraints (contacts, object persistence, and start–goal satisfaction) must be maintained throughout the rollout.

Classical planning methods, such as Task and Motion Planning, decompose tasks into structured subproblems and enforce constraint satisfaction through symbolic operators, while hierarchical control methods first solve for a high-level task plan and then refine it into a low-level motion plan. Compositional diffusion planning follows these core principles but provides a data-driven, probabilistic alternative to hand-engineered classical and hierarchical planners. We adopt a compositional generation perspective on long-horizon planning, we compose plans from overlapping, short-horizon factors produced by a pretrained diffusion model. The central challenge is the consistency of this composition: during forward diffusion, noisy variables become entangled across time, breaking factorization assumptions behind common compositional heuristics (e.g., score averaging)~\citep{Diffcollege,mishra2023generative,mishra2024generative,bartal2023multidiffusionfusingdiffusionpaths} and yielding brittle behavior when long-range constraints must propagate.

Our key insight is to compose where diffusion model estimations are most reliable: on their Tweedie estimates, which provide a stable domain in which strong and explicit compositional heuristics can be applied. We formulate planning as inference in a chain-structured factor graph over overlapping video chunks. Local priors come from a short-horizon diffusion backbone; global coherence is enforced by boundary agreement on Tweedie predictions, not on noisy states.  We instantiate this approach for compositional visual diffusion planning, representing a plan as a sequence of images. Crucially, the method operates purely at inference time: the short-horizon diffusion backbone is trained once on short clips and then frozen; at test time we compose long-horizon plans via message passing on Tweedie estimates, with no additional training, fine-tuning, or task-specific adapters. 

In summary, the key contributions of this work are:
(1) A diffusion planning framework that models long-horizon plans as chain-structured factor graphs over video segments and enforces boundary agreement on Tweedie estimates rather than on noisy diffusion states.
(2) Joint synchronous and asynchronous message passing over denoised variables, coupled with a training-free sampler that guides DDIM steps with diffusion-sphere guidance derived from message-passing residuals, preserving local sample quality, parallelism, and diversity while enforcing boundary agreement.
(3) A compositional planning benchmark and empirical study demonstrating significant improvements in temporal coherence, static quality, and task success on held-out start–goal combinations compared with prior compositional baselines~\citep{Diffcollege} that operate on noisy diffusion states.

\section{Related Work}
\paragraph{Diffusion Models For Planning.}
A flurry of work has leveraged diffusion models 
\citep{sohl2015deep,ho2020denoising} for planning~\citep{janner2022planning,ajay2022conditional,dong2024diffuserlite,he2023diffusion,ubukata2024diffusion,lu2025what,chen2024diffusion,liang2023adaptdiffuser,dong2024aligndiff}, with various applications such as path finding~\citep{carvalho2025motion, luo2024potential}, robotics~\citep{pearce2023imitating,fang2024dimsam}, and multi-agent~\citep{zhu2025madiff,shaoul2024multi}.
Performance can be further improved by increasing test-time compute such as tree search~\citep{feng2024resisting,yoon2025monte},
hierarchical planning~\citep{li2023hierarchical,chen2024simple}, 
and post-hoc refinement~\citep{lee2024refining,wang2022diffusion}.
Despite strong results, most prior work studies diffusion planning in low-dimensional state spaces that yield 2D trajectories. 
While recent work~\citep{xu2025vilp,xie2025latent,huang2024ardup} begins to consider more complex state spaces, they typically target simple or task-specific scenarios. 
In this paper, we investigate \emph{visual diffusion planning}, where a plan is represented as a sequence of images, and we introduce a training-free sampling method that scales to significantly longer horizons and to unseen start–goal combinations.








\paragraph{Compositional Diffusion Generation.} Compositional diffusion models are now well studied \citep{du2020compositional,garipov2023compositional,du2024compositional,mahajan2024compositional,okawa2024compositional,thornton2025composition}. One thread develops samplers for logical conjunctions of conditions, combining multiple prompts or constraints into coherent generations \citep{liu2022compositional,bradley2025mechanisms,zhang2025energymogen,yang2023compositional}. A complementary thread scales the number of inference-time tokens while reusing models trained on short horizons—yielding wide-field panoramas \citep{Diffcollege,bartal2023multidiffusionfusingdiffusionpaths,kim2024synctweediesgeneralgenerativeframework,lee2023syncdiffusioncoherentmontagesynchronized}, longer-duration videos \citep{wang2023gen,kim2024fifo,kim2025tuning}, and extended-horizon robotic plans \citep{Diffcollege,mishra2023generative,luo2025generative}. However, existing compositional methods suffer from instability when applied to noisy diffusion states, as they typically rely on score averaging or other heuristic combinations that assume factorization holds throughout the denoising process. In contrast, our approach operates on clean Tweedie estimates rather than noisy intermediate states, formulates the problem as factor graph inference with explicit boundary constraints, and employs principled message passing to maintain global consistency—yielding substantial improvements in both stability and plan quality over prior compositional planning methods.

\paragraph{Inference-Time Guidance for Diffusion Model.} Inference-time guidance steers diffusion sampling without retraining, enabling adaptive, controllable behavior at test time. This flexibility has driven progress in image restoration (inpainting, deblurring) \citep{DPS,DSG,yu2023freedomtrainingfreeenergyguidedconditional,ye2024tfgunifiedtrainingfreeguidance,song2023pseudoinverseguided}, style transfer \citep{bansal2023universalguidancediffusionmodels,he2023manifoldpreservingguideddiffusion}, and robot motion/behavior generation \citep{liao2025beyondmimicmotiontrackingversatile,black2025realtimeexecutionactionchunking,du2025dynaguidesteeringdiffusionpolices,song2023lossguided,Feng_2024}. However, most existing methods steers a fixed-length output, we frame guidance as a form of message passing between tokens—allowing information to propagate across the sequence. This perspective lets us stitch together short behavioral fragments into long-range, temporally consistent visual plan.

\section{Preliminaries}
\subsection{Factor Graph Formulation for Compositional Distributions}
A factor graph $z = [u^1, u^2, \dots, u^m]$ is a bipartite graph connecting factor nodes $\{x^i\}_{i=1}^n$ and variable nodes $\{u\}_{j=1}^m$, 
where $x^j \subseteq [u^1, u^2, \dots, u^m]$. An undirected edge between $x^i$ and $u^j$ exists if and only if $u^j \in x^i$. Given a factor graph that represents the factorization of joint distribution, previous works DiffCollage approximate it with Bethe approximation ~\citep{Diffcollege}, and Generative Skill Chaining (GSC) extends the same formulation to robot task-and-motion planning as its follow-up work:
\\[-0.1em]
\begin{equation}
    p(z_t) := \frac{\prod_{i=1}^n p(x_t^i)}{\prod_{j=1}^m p(u_t^j)^{d_j-1}} , \label{eq:bethe}
\end{equation}
\\[0.1em]
where $d_j$ is the degree of each variable $u_j$. 
Therefore, the estimated score is base on Bethe approximation is:
\begin{equation}
\nabla_{z_t} \log p(z_t) = \sum_{i=1}^n\nabla_{x_t^i} \log p(x_t^i) + 
\sum_{j=1}^m(1-d_j)\nabla_{u_t^j} \log(u_t^j).\label{eq:noisy_score}
\end{equation}
\\[0.1em]
For example, consider a linear chain $z = [u^1, u^2, u^3, u^4, u^5]$ with factors $x_1 = [u^1, u^2, u^3]$ and $x_2 = [u^3, u^4, u^5]$. We can represent the joint noisy distribution as:
\\[-0.1em]
\begin{equation}
p(z_t) = \frac{p(u_t^1, u_t^2, u_t^3)\, p(u_t^3, u_t^4, u_t^5)}{p(u_t^3)},
\end{equation}
\\[0.1em]
Given the linear chain graph, the overlapping variable $u^3$ (the one shared between neighboring factors $x_1$ and $x_2$) has degree $d = 2$, while the non-overlapping ones have $d = 1$ (i.e., their distribution come from individual factors). However, Bethe approximation(Eq.~\ref{eq:bethe}) holds in clean data (i.e., diffusion timestep $t=0$), but does not hold when $t>0$, we further prove its gap later (Theorem~\ref{thm:noisy-bethe-gap}). 
\subsection{Diffusion model and Training-Free Guided Diffusion }
Diffusion Models are a class of generative model that generates sample in the desired distribution  from an initial Gaussian distribution $p(x_T)$ by iteratively performing a denoising process. It has a pre-defined forward process $q(x_t \,|\,x_0) =  \mathcal{N} (x_t; \sqrt{\bar{\alpha}_t} x_0, (1-\bar{\alpha}_t) I)$, where $\bar{\alpha}$ is a scalar dependent on diffusion timestep $t$. 
In this work, we directly estimate Tweedie when training: $ \mathbb{E}_{t,x_0,\epsilon_t} [\normltwosq{x_0 - x_\theta(x_t, t)}]$($x_0$ predictor) and apply a DDIM step when sampling: 
\\[-0.1em]
\begin{equation}
    x_{t-1} = \sqrt{\bar{\alpha}_t} x_{0|t}
    + \sqrt{1- \bar{\alpha}_t - \sigma_t^2} \frac{x_t - \sqrt{\bar{\alpha}_t} x_{0|t}}{\sqrt{1-\bar{\alpha}_t}}
    + \sigma_t\epsilon_t,
\end{equation}
\\[0.1em]
where estimated Tweedie $x_{0|t}$ is the output of $x_\theta(x_t, t)$.

Classifier Guidance~\citep{dhariwal2021diffusionmodelsbeatgans} proposes to train a time dependent classifier in conditional generative tasks. 
Specifically, the conditional distribution $p(x_t|y)$ can be modeled by Bayes Rules $p(x_t|y) = p(x_t) p (y|x_t) / p(y)$ :
$\nabla_{x_t} \log p(x_t  \,| \, y)
= \nabla_{x_t} \log p(x_t)
+ \nabla_{x_t} \log p(y  \,| \, x_t) $, where $y$ represents the condition or measurement.  This paper focuses on conditional guidance in a training-free manner, all the guidance in this paper is in training-free manner. In training-free guidance setting, instead of explicitly training a classifier, the guidance term $p(y\,|\,x_t)$ can be modeled as a potential function $\exp{(-L(x_{0|t}))}$, which simplifies to a gradient-descent update during inference time:
\vspace{0.05cm}
\begin{equation}
\nabla_{x_t} \log p(y  \,| \, x_t) 
= \nabla_{x_t} \log \frac{\exp{(-L(x_{0|t}))}}{Z} 
= - \nabla_{x_t} L(x_{0|t}),
\end{equation}
\\[0.1em]
where $x_{0|t} = x_\theta(x_t, t)$ estimated by Tweedie's Predictor.
Since $\exp{(-L(x_{0|t}))}$ is a point estimation of distribution of $\mathbb{E}_{x_0\sim p(x_0|x_t)}[\exp(-L(x_0))]$, and the gap between them has a upper bound~\citep{DPS} and lower bound~\citep{DSG}, Diffusion sphere guidance~\citep{DSG} is proposed to eliminate this gap based by formulating a constrained optimization problem over a hypersphere with the mean to be $ \sqrt{\bar{\alpha}_t} x_{0|t} 
    + \sqrt{1- \bar{\alpha}_t - \sigma^2} \frac{x_t - \sqrt{\bar{\alpha}_t} x_{0|t}}{\sqrt{1-\bar{\alpha}_t}}$ 
and radius $\sqrt{s}\sigma_t$($s$ is the shape of $x_t$), and derive a closed form solution for the update :
\\[-0.1em]
\begin{equation}
x_{t-1} = \sqrt{\bar{\alpha}_t} x_{0|t} 
+ \sqrt{1- \bar{\alpha}_t - \sigma^2} \frac{x_t - \sqrt{\bar{\alpha}_t} x_{0|t}}{\sqrt{1-\bar{\alpha}_t}} 
- \sqrt{s}\sigma_t \frac{\nabla_{x_t} L(x_{0|t})}{\norm{\nabla_{x_t} L(x_{0|t})}}.    \label{eq:dsg}
\end{equation}




\section{Method}
We formulate long-horizon planning as inference over a chain-structured factor graph of overlapping video chunks, where pretrained short-horizon diffusion models provide local priors. Our key innovation is enforcing boundary agreement on estimated clean data (Tweedie estimates) rather than noisy intermediate states, addressing the core limitation that factorization assumptions break down during diffusion sampling. We achieve this through novel synchronous and asynchronous message passing that operates on Tweedie estimates, producing globally consistent guidance without additional training. The approach involves formulating the planning problem as a factor graph (Section~\ref{sec:pf}), deriving its distribution (Section~\ref{sec:dist}), and sampling via our message passing scheme (Section~\ref{sec:mp}).

\subsection{Problem formulation}\label{sec:pf}
While a diffusion model can learn a prior over short, local behaviors, long-horizon planning requires additional structure to ensure feasibility. Beyond satisfying the start and the goal, intermediate pieces must stitch with local consistency. We therefore train a short-horizon diffusion model $x_\theta$ on local task segments; at test time, given a start image and a goal image, we sample from a Gaussian prior, partition the trajectory into overlapping chunks, generate each chunk with $x_\theta$, and compose them into a coherent plan that is finally mapped back to an action sequence through inverse dynamics model.

We represent the plan as a linear chain \(z = [u^1, \dots, u^m]\) and place \(n\) overlapping factors
$
x^i = [u^{2i-1}, u^{2i}, u^{2i+1}], \quad i=1,\dots,n,
$ each collecting three consecutive frames. The endpoints \(u^1 = s\) and \(u^m = g\) serve as the start and goal boundary variables.  
Let \(A_i\) and \(B_i\) denote linear selectors that extract the first and last frames of factor \(x^i\), respectively.  
The feasibility of a plan is enforced by the following boundary agreements:
\\[-0.1em]
\begin{equation}
\begin{aligned}
\text{(Start/Goal Anchoring)} \quad & A_1 x^1 = s, \qquad B_n x^n = g, \\
\text{(Transition Boundary)} \quad & B_i x^i = A_{i+1} x^{i+1}, \quad i = 1,\dots,n-1.\label{eq:agreement}
\end{aligned}
\end{equation}
\\[0.2em]
This factorization reduces global planning to local generation with explicit boundary equalities and
scales by reusing the same local model \(x_\theta\) across time while preserving consistency via start–goal anchoring and transition agreements.
All factors/variable in latent space encoded by the Cosmos tokenizer~\citep{agarwal2025cosmos} into a compact latent representation; we perform planning entirely in this latent space rather than pixels,  which reduces dimensionality to save compute.
\subsection{Distribution of Factor Graph}\label{sec:dist}
Prior work~\citep{Diffcollege,mishra2023generative,mishra2024generative}  relies on the Bethe-style product of factors normalized by variables (Eq.~\ref{eq:bethe}),  which is accurate on clean data. Forward diffusion, however, perturbs factorization assumption between factors and variables.
\\[-1em]
\begin{restatable}[Noisy-Bethe Gap Theorem]{theorem}{NoisyBetheGap}\label{thm:noisy-bethe-gap}
Consider a linear chain $z=[u^1,u^2,u^3]$ with pairwise factors $[u^1,u^2]$ and $[u^2,u^3]$, where $u^2$ is the transition boundary variable. 
Assume the forward noising processes are $p(u_t^1,u_t^2\,|\,u^1,u^2)$, $p(u_t^2,u_t^3\,|\,u^2,u^3)$, and $p(u_t^2\,|\,u^2)$.
Let 
$a(u^2)= \int p(u^1, u^2) p(u_t^1, u_t^2 \,|\,u^1, u^2) \, du^1$,
$b(u^2)= \int p(u^2, u^3) p(u_t^2, u_t^3 \,|\,u^2, u^3) \, du^3$,
$c(u^2)=p(u^2)p(u_t^2\,|\,u^2)$,
$Z=\int c(u^2)\,du^2$, and $q(u^2)=c(u^2)/Z$.
Denote by $p(u_t^1,u_t^2,u_t^3)$ the true noisy distribution and by $\hat{p}(u_t^1,u_t^2,u_t^3)$ the estimator from Eq.~\ref{eq:bethe}.
Then the gap between true distribution and estimated distribution is:
\\[-0.1em]
\begin{equation}
\Delta \;=\; p(u_t^1,u_t^2,u_t^3) - \hat{p}(u_t^1,u_t^2,u_t^3) \;=\; Z\,\mathrm{Cov}_{u^2\sim q}\Big[\tfrac{a}{c},\,\tfrac{b}{c}\Big].
\end{equation}
\end{restatable}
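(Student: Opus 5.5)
The plan is to compute both sides as integrals over the single clean boundary variable $u^2$ and recognize the difference as a covariance under $q$. I would make explicit the two structural assumptions implicit in the statement, since these are the ones under which Eq.~\ref{eq:bethe} is exact at $t=0$.

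\emph{Clean Markov chain.} The clean joint factorizes as
\[
p(u^1,u^2,u^3)=\frac{p(u^1,u^2)\,p(u^2,u^3)}{p(u^2)}.
\]

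\emph{Per-variable forward noise.} The forward noise acts independently on each variable, so
\[
p(u_t^1,u_t^2,u_t^3\,|\,u^1,u^2,u^3)=\frac{p(u_t^1,u_t^2\,|\,u^1,u^2)\,p(u_t^2,u_t^3\,|\,u^2,u^3)}{p(u_t^2\,|\,u^2)}.
\]
This holds because each conditional is a product of per-coordinate Gaussians $q(u_t^j\,|\,u^j)$, so the shared $u^2$ kernel appears once in the numerator after cancellation.

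\textbf{Step 1 (true noisy density).} Marginalize the clean variables:
\[
p(u_t^1,u_t^2,u_t^3)=\int p(u^1,u^2,u^3)\,p(u_t^{1:3}\,|\,u^{1:3})\,du^{1:3}.
\]
Substituting both factorizations, the integrand splits into a factor depending on $(u^1,u^2)$, a factor depending on $(u^2,u^3)$, and the denominator $p(u^2)\,p(u_t^2\,|\,u^2)=c(u^2)$. Applying Fubini to integrate $u^1$ and $u^3$ first gives
\[
p(u_t^1,u_t^2,u_t^3)=\int \frac{a(u^2)\,b(u^2)}{c(u^2)}\,du^2 .
\]

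\textbf{Step 2 (Bethe estimator).} Eq.~\ref{eq:bethe} uses the noisy marginals of the factors and of the shared variable. These are
\[
p(u_t^1,u_t^2)=\int a\,du^2,\qquad p(u_t^2,u_t^3)=\int b\,du^2,\qquad p(u_t^2)=\int c\,du^2=Z .
\]
Hence
\[
\hat p=\frac{\bigl(\int a\,du^2\bigr)\bigl(\int b\,du^2\bigr)}{Z}.
\]

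\textbf{Step 3 (rewrite under $q$).} Multiply and divide by $c$, using $q=c/Z$:
\[
\int a\,du^2=Z\,\mathbb{E}_q[a/c],\qquad \int b\,du^2=Z\,\mathbb{E}_q[b/c],\qquad \int \frac{ab}{c}\,du^2=Z\,\mathbb{E}_q\Bigl[\frac{a}{c}\cdot\frac{b}{c}\Bigr].
\]
Then
\[
\Delta=Z\Bigl(\mathbb{E}_q\Bigl[\frac{a}{c}\cdot\frac{b}{c}\Bigr]-\mathbb{E}_q\Bigl[\frac{a}{c}\Bigr]\,\mathbb{E}_q\Bigl[\frac{b}{c}\Bigr]\Bigr)=Z\,\mathrm{Cov}_q\Bigl[\frac{a}{c},\frac{b}{c}\Bigr],
\]
which is the claimed identity.

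\textbf{Main obstacle.} The hard part is Step 1, specifically justifying the two factorizations that the statement leaves implicit. The rest is bookkeeping, up to a few technical conditions:
\begin{itemize}
\item The conditional-independence structure of the clean chain and of the noise kernel must be stated as hypotheses.
\item $c>0$ wherever $a$ or $b$ is nonzero, so the ratios are well defined. This is automatic for Gaussian kernels whenever $p(u^2)>0$.
\item Integrability, so that Fubini applies and the covariance is finite.
\end{itemize}

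As a sanity check, I would remark that at $t=0$ the kernels become Dirac masses. The ratios $a/c$ and $b/c$ then collapse to functions evaluated at the single point $u^2=u_t^2$, so the covariance vanishes. This recovers exactness of Eq.~\ref{eq:bethe} on clean data.
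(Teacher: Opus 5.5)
Your proposal is correct and follows the paper's proof essentially step for step. Both marginalize the clean variables to get $p(u_t^1,u_t^2,u_t^3)=\int ab/c\,du^2$, write the Bethe estimator as $(\int a\,du^2)(\int b\,du^2)/Z$, and change measure to $q=c/Z$ to read off $Z\,\mathrm{Cov}_q[a/c,b/c]$. Your explicit statement of the clean Markov factorization and the per-variable noise factorization is a useful clarification, since the paper invokes both silently when it rewrites the joint integrand as a ratio over $p(u^2)\,p(u_t^2\,|\,u^2)$.
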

\vspace{-1em}
\paragraph{Interpretation.} The proof is in appendix~\ref{appendix:noisy-bethe}.  We can view $a(u^2)$ as the left-factor message into the boundary $u^2$, $b(u^2)$ as the right-factor message into the boundary, and $c(u^2)$ as the local boundary evidence.
Intuitively, $a(u^2)$ and $b(u^2)$ quantify how the left and right pairwise factors, after passing through their respective forward-noise channels, ``vote'' for different boundary values $u^2$.
The term $c(u^2)$ provides the unary baseline that captures how plausible each boundary value is on its own (and how it transmits noise).
The noisy-Bethe gap $\Delta$ is exactly the covariance—under the boundary weighting $q$—between the two relative gains $a(u^2)/c(u^2)$ and $b(u^2)/c(u^2)$.
When these gains are uncorrelated (or proportional) across $u^2$, the covariance vanishes and the Bethe approximation remains accurate.
Forward diffusion typically introduces shared, heteroscedastic distortions in $u^2$, which make the two gains rise and fall together; this produces a nonzero covariance and, consequently, a systematic gap.

\begin{wrapfigure}[13]{l}{0.6\textwidth}
  \centering
  \includegraphics[width=\linewidth]{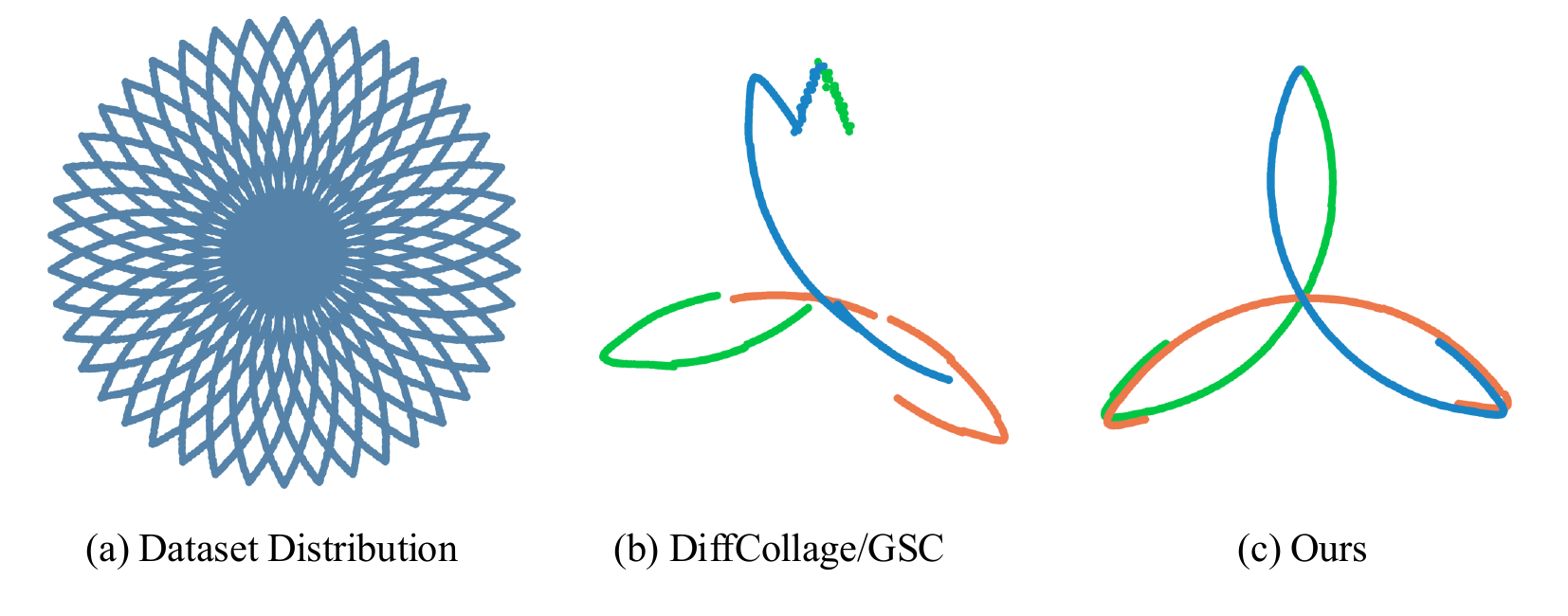}
  \vspace{-2em}
  \caption{Motivating toy example. We train a short-horizon diffusion model on circular \emph{arc} clips (left). At test time, three $120^\circ$ arc generators are composed to form a three-petal “flower”. }
  \label{fig:toy}
  
\end{wrapfigure}
Figure~\ref{fig:toy} illustrates the core failure mode of DiffCollage which is deployed based on noisy factorization assumption.
A DiffCollage/GSC-style stitcher (middle) drifts and leaves boundary gaps, while our inference-time message passing (right) aligns shared boundaries and closes the loops. 
This mirrors the limitation of a Bethe-style product of factors (Eq.~\ref{eq:bethe}) under forward diffusion: noise corrupts factorization assumption.
Motivated by the Noisy-Bethe gap, instead of enforcing dependencies directly among the noisy factors $x_t^{1:n}$, we impose them on the concatenated Tweedie (denoised) estimates $x_{0|t}^{1:n}$. Accordingly, our approximation to the factor-graph distribution is:
\\[-0.1em]
\begin{equation}
    p(z_t) = \prod_{i=1}^n p(x_t^i) \cdot 
    \exp(-L(x_{0|t}^{1:n})),
\end{equation}
\\[0.1em]
where $\exp(-L(x_{0|t}^{1:n}))$ acts as a potential that penalizes inconsistencies—and thus enforces dependencies—among the estimated clean variables $x_{0|t}^{1:n}=x_\theta(x_t^{1:n})$.
\subsection{Jointly Synchronous and Asynchronous Message Passing}\label{sec:mp}
Message passing proceeds through \textit{boundary factors}: when the transition boundaries together with the start and goal boundaries agree, the plan is feasible (see Eq.~\ref{eq:agreement}).
We therefore optimize boundary agreement explicitly.
Our synchronous scheme treats the chain as a Gaussian linear system and drives a single residual ($\Sigma^{-1} x_{0|t}^{1:n} = \eta$) to zero via parallel updates, but can be numerically stiff.
Our asynchronous scheme uses one-sided, stop-gradient targets to propagate constraints forward and backward in a TD-style manner, yielding faster and more stable convergence at the cost of mild bias.
Finally, diffusion-sphere guidance interpolates between unconditional sampling and loss-driven descent, balancing alignment and diversity.
\subsubsection{Synchronous Message Passing}
We encode the boundary condition as a Gaussian potential,
$\psi_{i-1,i} := \exp\! \,(-\tfrac{1}{c_{i-1}}\|B_{i-1}x_{0|t}^{i-1}-A_i x_{0|t}^{i}\|^2)$,
where $c_{i-1}$ denotes the variance.

\begin{restatable}[Synchronous Message Passing Constraint]{proposition}{SyncMP}\label{thm: sync_mp}
Let $x^{1:n}\in \mathbb{R}^{n\times tchw}$ denote the concatenated intermediate factors in a chain-structured factor graph with transition boundaries $\psi_{i-1,i}$. Given the start boundary $\psi_{s, 1}$ and the goal boundary $\psi_{n, g}$, the joint constraints distribution over all intermediate factors is Gaussian:
\\[-0.1em]
\begin{equation}
    p_{\mathrm{sync}}( x^{1:n} \,| \, s, \, g) \:\propto\: \exp( - \tfrac{1}{2} {(x^{1:n})}^T \Sigma^{-1} x^{1:n} \,+\, \eta^T x^{1:n}),
\end{equation}

where $
\Sigma^{-1} =
\begin{bmatrix}
\frac{A_1^T A_1}{c_0} + \frac{B_1^T B_1}{c_1} & -\frac{B_1^T A_2}{c_1} & & \\
-\frac{A_2^T B_1}{c_1} & \frac{A_2^T A_2}{c_1} + \frac{B_2^T B_2}{c_2} & -\frac{B_2^T A_3}{c_2} & \\
& -\frac{A_3^T B_2}{c_2} & \frac{A_3^T A_3}{c_2} + \frac{B_3^T B_3}{c_3} & \\
& & & \ddots
\end{bmatrix},\quad
\eta =
\begin{bmatrix}
\frac{A_1^T s}{c_0}\\
0\\
0\\
\vdots\\
\frac{B_n^T g}{c_n}
\end{bmatrix}.
$
\end{restatable}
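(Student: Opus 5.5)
The plan is to write the joint constraint density as the product of the boundary potentials and expand its negative log-exponent as a quadratic form. Introduce the start and goal potentials $\psi_{s,1} := \exp(-\tfrac{1}{c_0}\|A_1 x^1 - s\|^2)$ and $\psi_{n,g} := \exp(-\tfrac{1}{c_n}\|B_n x^n - g\|^2)$, matching the transition potentials $\psi_{i-1,i}$. Then take
\begin{equation}
p_{\mathrm{sync}}(x^{1:n}\,|\,s,g) \;\propto\; \psi_{s,1}\,\prod_{i=2}^{n}\psi_{i-1,i}\,\psi_{n,g},
\end{equation}
which is the chain-structured factor-graph density built only from boundary factors. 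The exponent is $-E(x^{1:n})$, where
\begin{equation}
E(x^{1:n}) = \tfrac{1}{c_0}\|A_1x^1-s\|^2+\sum_{i=1}^{n-1}\tfrac{1}{c_i}\|B_ix^i-A_{i+1}x^{i+1}\|^2+\tfrac{1}{c_n}\|B_nx^n-g\|^2 .
\end{equation}
Since every term is a squared norm of an affine function of $x^{1:n}$, $E$ is quadratic, so the density is Gaussian up to normalization.

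Next, expand each square and collect terms into the block form $\tfrac12 (x^{1:n})^T\Sigma^{-1}x^{1:n} - \eta^T x^{1:n} + \mathrm{const}$. The anchoring terms give:
\begin{itemize}
\item diagonal contributions $A_1^TA_1/c_0$ in block $(1,1)$ and $B_n^TB_n/c_n$ in block $(n,n)$;
\item linear terms $-2 s^TA_1x^1/c_0$ and $-2 g^TB_nx^n/c_n$;
\item constants $\|s\|^2/c_0$ and $\|g\|^2/c_n$, which are absorbed into the normalizer.
\end{itemize}
Each transition term $\tfrac{1}{c_i}\|B_ix^i-A_{i+1}x^{i+1}\|^2$ gives:
\begin{itemize}
\item $B_i^TB_i/c_i$ in block $(i,i)$ and $A_{i+1}^TA_{i+1}/c_i$ in block $(i+1,i+1)$;
\item cross terms $-\tfrac{2}{c_i}(x^i)^TB_i^TA_{i+1}x^{i+1}$, which split symmetrically into off-diagonal blocks $-B_i^TA_{i+1}/c_i$ at $(i,i+1)$ and $-A_{i+1}^TB_i/c_i$ at $(i+1,i)$.
\end{itemize}
Summing over $i$, block $(i,i)$ becomes $A_i^TA_i/c_{i-1}+B_i^TB_i/c_i$, and all blocks $(i,j)$ with $|i-j|\ge 2$ vanish because no potential couples non-adjacent factors. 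This reproduces the tridiagonal $\Sigma^{-1}$ in the statement, and $\eta$ has nonzero blocks only in positions $1$ and $n$.

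The main obstacle is bookkeeping the factors of $2$ and $\tfrac12$. The raw exponent $-E$ has coefficient $1$ on the quadratic, but the target form is $-\tfrac12 x^T\Sigma^{-1}x+\eta^Tx$. The fix is to read the potentials with the variance normalized so that $\exp(-\tfrac{1}{2c}\|\cdot\|^2)$ is the Gaussian convention, equivalently rescaling each $c_i$ by $2$. With that convention the quadratic term is exactly $\tfrac12 x^T\Sigma^{-1}x$ and the linear term is $\eta^Tx$ with $\eta_1=A_1^Ts/c_0$ and $\eta_n=B_n^Tg/c_n$; I will state this explicitly. Finally, $\Sigma^{-1}=M^TDM\succeq 0$, where $M$ stacks the affine constraint maps and $D=\mathrm{diag}(1/c_i)$. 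So the expression defines a (possibly degenerate) Gaussian, and it is proper on the constrained boundary coordinates whenever the constraints determine them. Interior frames not touched by the selectors contribute a flat direction, which is harmless because the prior $p(x_t^i)$ supplies them in the full sampler.
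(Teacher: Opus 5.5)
Your proposal is correct and takes the same route as the paper: write $p_{\mathrm{sync}}$ as the product of the start, transition, and goal potentials, expand each squared residual, and collect the block-tridiagonal $\Sigma^{-1}$ and the two nonzero blocks of $\eta$. Making the factor of two explicit is the right call, since the paper's expansion silently uses $\tfrac{1}{2c_i}$ coefficients even though $\psi_{i-1,i}$ is defined with $\tfrac{1}{c_{i-1}}$; one small correction to your closing aside is that for $n\ge 2$ the form is degenerate even on the boundary coordinates, because only the agreement $B_i x^i = A_{i+1}x^{i+1}$ is penalized and never the shared value itself.
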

The detailed proof is in appendix~\ref{appendix:sync_mp}. We perform synchronous message passing on the estimated Tweedie factors $x_{0|t}^{1:n}$ by penalizing the deviation from the consistent linear system $\Sigma^{-1} x_{0|t}^{1:n} = \eta$. In practice, we set $c_i = 1$ for all $i = 0, \dots, n$.: 
\\[-0.1em]
\begin{equation} 
L_{sync} = \normltwo{\Sigma^{-1} x_{0|t}^{1:n} - \eta}
\end{equation}
\\[0.3em]
Here, synchronous refers to a lockstep update schedule, in which all updates are computed from the same current iterate and applied simultaneously. This scheme preserves parallelism, eliminates order-dependent effects, and guides the Tweedie estimates toward satisfying the chain constraints. 
\subsubsection{Asynchronous Message Passing}
While the synchronous objective is conceptually clean, the resulting hard consistency constraint is difficult to optimize and often exhibits slow or unstable convergence~\citep{ortiz2021visualintroductiongaussianbelief}. To improve stability and speed, we adopt an asynchronous scheme with bootstrapped targets and stop-gradient, akin to temporal-difference updates~\cite{hansen2024tdmpc2scalablerobustworld,li2025rewardfreeworldmodelsonline}. Concretely, we optimize:
\\[-0.1em]
\begin{equation}
\begin{aligned}
L_{async} 
&= 
\underbrace{\normltwo{s - A_{1}x_{0|t}^{1}}
+ \sum_{i=1}^{n-1} \gamma^i \normltwo{sg(B_i \hat{x}_{0|t}^{i}) - A_{i+1}x_{0|t}^{i+1}}}_{\text{forward passing}} \\
&\quad +
\underbrace{\sum_{i=1}^{n-1} \gamma^{n-i} \normltwo{B_i x_{0|t}^{i} - sg(A_{i+1}\hat{x}_{0|t}^{i+1})}
+ \normltwo{B_{n}x_{0|t}^{n} - g}}_{\text{backward passing}} \,,
\end{aligned}
\end{equation}
\\[0.1em]
where $sg(\cdot)$is the stop-gradient operator, $\hat{x}_{0|t}^{i+1}$ is a target produced by diffusion model with latest parameters, and $x_{0|t}^{i}$ is produced by an EMA of the model parameters. The discount $\gamma$ down-weights messages as they move away from the start or the goal.

The boundary terms $\normltwo{s - A_{1}x_{0|t}^{1}}$ and $\normltwo{B_{n}x_{0|t}^{n} - g}$ anchor the chain to the start and goal. The forward message loss penalizes mismatch between $B_i \hat{x}_{0|t}^{i}$ (outgoing) and $A_{i+1}x_{0|t}^{i+1}$ (incoming), with $sg(\cdot)$ enforcing one-way forward passing. Similarly, the backward message loss mirrors the same constraint in the reverse direction.
\subsubsection{Diffusion-Sphere Guided Message Passing}
Having derived differentiable losses for synchronous and asynchronous message passing, we adopt the training-free guidance of DSG~\citep{DSG}. As noted by DSG (Eq.~\ref{eq:dsg}), stronger guidance improves alignment but can reduce sample diversity. To balance alignment and exploration, we interpolate between the unconditional sampling direction and the normalized descent direction induced by our loss:
\\[-0.1em]
\begin{equation}
    d_m = d^{sample} + g_r ( d^* - d^{sample}), \quad
    x_{t-1}^{1:n} = \mu_{t-1}^{1:n} + r \frac{d_m}{\| d_m \|}.
\end{equation}
\\[0.3em]
Here $d^{sample} = \sigma_t \epsilon_t$ is the unconditional annealing step, 
$d^* = -\sqrt{s}\,\sigma_t \cdot 
\frac{\nabla_{x_t^{1:n}} L}
     {\| \nabla_{x_t^{1:n}} L \|}$ 
is the steepest descent direction for our sync/async objective, and $d_m$ is subsequently normalized to satisfy the spherical-Gaussian constraint.

\begin{figure}[t]
\vspace{-10pt}
\begin{minipage}{\linewidth}
\begin{algorithm}[H]
\caption{Compositional Generation}\label{alg:compositional}
\begin{algorithmic}[1] 

\State \textbf{Require:} Model EMA $x_\theta$, Latest Model $\hat{x}_\theta$ 
\State \textbf{Hyperparameters:} Diffusion Time Step $T$, number of factors chained $n$, guidance weight $g$
    \State Sample $z_T \sim \mathcal{N}(0, I)$
    \State Split $z_T$ to $n$ overlapping chunks $x_T^{1:n}$
    \For{$t = T$ to $1$}
        \State $x_{0|t}^{1:n} = x_\theta(x_t^{1:n})$\Comment{forward model passing}
        \State $\hat{x}_{0|t}^{1:n} = \hat{x}_\theta(x_t^{1:n})$
        \State $\mu_{t-1}^{1:n} = \sqrt{\bar{\alpha}_t} x_{0|t}^{1:n} + \sqrt{1- \bar{\alpha}_t - \sigma^2} \frac{x_t^{1:n} - \sqrt{\bar{\alpha}_t} x_{0|t}^{1:n}}{\sqrt{1-\bar{\alpha}_t}}  $\Comment{DDIM Step}
        \State $L = L_{sync} + L_{async}$ \Comment{Jointly sync and async message passing}
        \State $d^* = - \sqrt{s} \sigma_t \cdot \frac{\nabla_{x_t^{1:n}} L}{\norm{L}}$ \Comment{Diffusion Sphere Guidance}
        \State  $d^{sample} = \sigma_t \epsilon_t$
        \State $d_m = d^{sample} + g(d^*-d^{sample})$
        \State $x_{t-1}^{1:n} = \mu_{t-1}^{1:n} + r \frac{d_m}{\norm{d_m}}$
      
    \EndFor
    \State merge  chunks $x_0^{1:n}$ to get final plan $z_0$
    \State \Return $z_0$
\end{algorithmic}
\end{algorithm}
\end{minipage}
\vspace{-17pt}
\end{figure}

\subsection{Compositional Video Planning via Inference-Time Diffusion Scaling}

The procedure for compositional generation is summarized in Algorithm~\ref{alg:compositional}. At a high level, the goal is to generate long-horizon trajectories by decomposing them into overlapping local factors and enforcing boundary agreement across those factors during the diffusion sampling process. At each timestep, DDIM provides the base update, while a joint synchronous–asynchronous message passing loss defines a residual that Diffusion Sphere Guidance interpolates against, steering updates toward agreement without collapsing diversity. The resulting updates are local and parallel across overlapping factors yet collectively converge to a feasible, consistent plan. After all steps, merging the denoised chunks yields a smooth, temporally aligned trajectory $z_0$. 

For robot manipulation planning, we train a video diffusion model on randomly sampled short chunks from long-horizon demonstrations and an inverse dynamics model that predicts actions from consecutive frames. At test time, we condition the diffusion model on start and goal images and apply the compositional generation procedure to produce complete video plans. The resulting visual trajectory is converted into executable robot actions via the inverse dynamics model. The procedure is training-free, plug-and-play, and compatible with unconditional short-horizon diffusion backbones, enabling generalization to unseen start-goal combinations without task-specific retraining.

\section{Experiments}
We present experiment results on multiple robotic manipulation scenes spanning 100 tasks (18 in distribution, 82 out of distribution) of varying difficulties. 
Our objective is to investigate (1) the visual fidelity  of the generated video plans (Section~\ref{sec:visual_quality}) (2) how the proposed compositional visual planning can generalize to long-horizon unseen tasks (Section~\ref{sec:success rat}) (3) how each proposed component affect the generation performance of our method (Section~\ref{sec:ablation_studies}) (4) the effectiveness of our method in real robot manipulation tasks (Section~\ref{sec:real}).

\textbf{Compositional Planning Benchmark.} We develop a simulation benchmark for compositional planning in robotic manipulation based on ManiSkill \citep{mu2021maniskill}, where each scene contains $N$ start states and $N$ goal states, resulting in $N\cdot N$ tasks (i.e., different start/goal pairs) per scene, as shown in Appendix~\ref{appendix:benchmark_tasks}.

Our training datasets only contains demonstrations for $N$ start-goal pairs.
At test time, we evaluate the planner on both the $N$ seen start–goal pairs (in-distribution) and the remaining $N \cdot N - N$ unseen pairs (out-of-distribution), since a capable planner should generalize to new combinations if the dataset covers all the required fragments. 
For example, in the Real World setting (Figure~\ref{fig:real}), demonstrations cover only the orange or blue regions; the planner must generalize across them to form cross-region plans unseen in the dataset but composable from its fragments. We address this type of generation by learning from short demonstration chunks randomly taken from long-horizon tasks and compositionally generates multiple chunks at inference time to construct the final plan.

\begin{figure}[h!]
  \centering
  \includegraphics[width=0.95\linewidth]{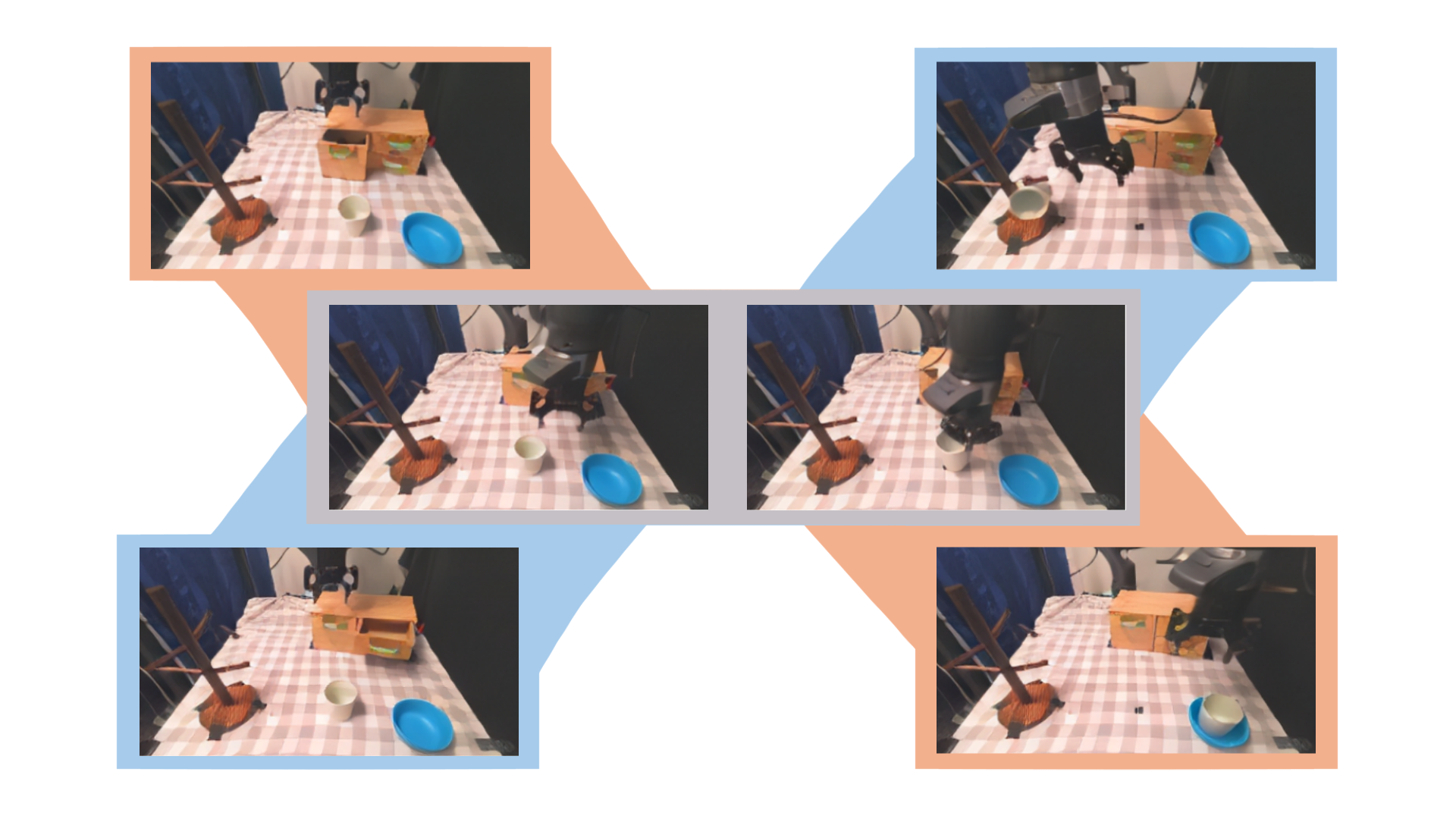}
  \caption{\textbf{Real World Task Layout.}This task involves 2 start and 2 goal configurations. We also evaluate on more challenging tasks. A complete list of task settings is provided in Appendix~\ref{appendix:benchmark_tasks}.}
  \label{fig:real}
\end{figure}

\textbf{Evaluation Setup.}
Given specified start and goal image as task context, our method first synthesize a sequence of frames as subgoals. We then use an MLP-based inverse dynamics model to predict the pose of end-effector for the robot to execute, conditioned on adjacent images. The inverse dynamics models are trained using the same demonstrations as the planner conditioned on adjacent images.
We report both the video quality metrics and the success rates of the planners over the robotic manipulation tasks. 
An episode is counted as success if the target objects ends up the specified state within a small tolerance. For each environment, we report the success rate over all evaluation episodes. We evaluate all methods with 5 random seeds for each experiment and report the mean and standard deviation.

\textbf{Baselines.}  For both policy-based and composition-based baseline, details are in Appendix~\ref{appendix:baselines}.
For compositional baseline \textbf{DiffCollage/GSC} (where GSC is DiffCollage adapted to robotic planning), we condition on the start and goal images to generate an entire plan by noisy factorization Eq.~\ref{eq:noisy_score}, and then use an inverse dynamics model to execute that plan. 
\subsection{Video Generation Quality Study}\label{sec:visual_quality}
Beyond reporting task success rate, we also evaluate the visual fidelity of the synthesized video plans. Even when rollouts verify feasibility, perceptual quality still warrants careful analysis. Accordingly, we score generated videos with VBench++~\citep{huang2024vbenchcomprehensiveversatilebenchmark}, focusing on robotics-centric metrics that matter for control: \textit{Dynamic Quality} (inter-frame), which includes (i) motion smoothness—capturing temporal stability of robot/object motion—and (ii) background consistency—testing whether the scene remains coherent over time; and \textit{Static Quality} (frame-wise), which includes Aesthetic and Imaging Quality to ensure frames are clear and largely artifact-free. 
Our generation strategy substantially improves the time-dependent properties that matter for control: across all scenes and distributions, Motion Smoothness and Background Consistency far exceed DiffCollage~\citep{Diffcollege}. This translates into dynamically executable trajectories and coherent spatiotemporal scenes. In terms of static quality, Aesthetic remains comparable, while Imaging shows a consistent and large advantage, directly reflecting fewer blurry frames and cleaner visuals.
\begin{table}
\centering
\setlength{\tabcolsep}{4pt}
\renewcommand{\arraystretch}{1.3}
\adjustbox{max width=\textwidth}{
\begin{tabular}{ll
                cc cc
                cc cc}
\toprule
\multirow{2}{*}{\textbf{Scene}} & \multirow{2}{*}{\textbf{Type}}
& \multicolumn{4}{c}{\textbf{Dynamic Quality} $\uparrow$}
& \multicolumn{4}{c}{\textbf{Static Quality} $\uparrow$} \\
\cmidrule(lr){3-6}\cmidrule(lr){7-10}
& & \multicolumn{2}{c}{Motion Smoothness}
  & \multicolumn{2}{c}{Background Consistency}
  & \multicolumn{2}{c}{Aesthetic}
  & \multicolumn{2}{c}{Imaging} \\
\cmidrule(lr){3-4}\cmidrule(lr){5-6}\cmidrule(lr){7-8}\cmidrule(lr){9-10}
& & DiffCollage & Ours & DiffCollage & Ours & DiffCollage & Ours & DiffCollage & Ours \\

\midrule
\multirow{2}{*}{\makecell[l]{Overall}}
  & \texttt{\textbf{IND}}
  & \valstd{0.88}{0.04} & \valstd{\textbf{0.94}}{0.06}
  & \valstd{0.81}{0.04} & \valstd{\textbf{0.90}}{0.04}
  & \valstd{0.49}{0.03} & \valstd{\textbf{0.50}}{0.03}
  & \valstd{0.60}{0.05} & \valstd{\textbf{0.70}}{0.03} \\
  & \texttt{\textbf{OOD}}
  & \valstd{0.87}{0.06} & \valstd{\textbf{0.97}}{0.05}
  & \valstd{0.80}{0.07} & \valstd{\textbf{0.90}}{0.05}
  & \valstd{0.46}{0.04} & \valstd{\textbf{0.48}}{0.03}
  & \valstd{0.55}{0.05} & \valstd{\textbf{0.69}}{0.05} \\
\bottomrule
\end{tabular}
}
\caption{\textbf{Comparison across four scenes on Dynamic/Static Quality.} Our results are averaged over 5 seeds and standard deviations are shown after the ± sign.}
\label{tab:quality_across_scenes}
\end{table}

\subsection{Compositional Planning Benchmark}\label{sec:success rat}
We present the robot manipulation success rates of 4 different escenes in Table \ref{table:main_comp_bench}. 
We separately report the success rates of IND and OOD tasks, where IND represents the $N$ tasks seen in the training data while OOD represents the $N\cdot N-N$ unseen tasks.
We observe that DiffCollage fails at almost all tasks. Qualitatively, we find that the synthesized images of DiffCollage tend to be blurry or even unrealisitc, perhaps due to its score averaging sampling scheme. Such suboptimal images will further confuses the inverse dynamic models, cause unstable behaviors and failures.
In contrast, our method achieves significantly higher success rates, indicating that the generated visual plans are realistic and accurate for the inverse dynamic model to follow.
We also include several multiple representative policy learning baselines, such as Goal-Conditioned Diffusion Policy (GCDP).
We notice that though strong policy learning baseline is able to perform well on IND tasks, their performance suffers from a significant degradation on OOD tasks. In contrast, our method—enabled by the graphical chain formulation and message passing—maintains stable performance regardless of the task distribution.
\begin{table*}[h!]
\centering
\adjustbox{max width=\textwidth}{
\begin{tabular}{l  c  ccccccccc}
\toprule
\textbf{Scene} & \textbf{Type} & \textbf{LCBC} & \textbf{LCDP} &
\textbf{GCBC} & \textbf{GCDP} & 
\textbf{DiffCollage} & \textbf{CompDiffuser}& 
\textbf{Ours} \\
\midrule
\multirow{2}{*}{\makecell{Tool-Use}}
  & \texttt{\textbf{IND}} 
  & \valstd{80}{7} & \valstd{95}{2} 
  & \valstd{85}{5} & \valstd{96}{3} 
  & \valstd{1}{2} & \valstd{60}{2}
  & \valstd{\mathbf{97}}{3} & \\
  & \texttt{\textbf{OOD}}
  & \valstd{15}{3} & \valstd{37}{8}
  & \valstd{13}{6} &  \valstd{42}{13}
  & \valstd{0}{0} & \valstd{51}{3}
  & \valstd{\mathbf{96}}{2} &   \\
\cmidrule{1-10}
\multirow{2}{*}{\makecell{Drawer}}
  & \texttt{\textbf{IND}}
  & \valstd{35}{6} &  \valstd{54}{6}
  &  \valstd{30}{5} &  \valstd{50}{6}
  & \valstd{0}{0} & \valstd{20}{5}
  & \valstd{\mathbf{53}}{5} &  \\
  & \texttt{\textbf{OOD}} 
  &  \valstd{6}{5}& \valstd{26}{14} 
  &  \valstd{7}{5}&  \valstd{18}{16} 
  & \valstd{0}{0} & \valstd{18}{3}
  & \valstd{\mathbf{52}}{6} &  \\
\cmidrule{1-10}
\multirow{2}{*}{\makecell{Cube}}
  & \texttt{\textbf{IND}} 
  & \valstd{28}{3} & \valstd{58}{5} 
  & \valstd{26}{4}&  \valstd{60}{3} 
  & \valstd{0}{0} & \valstd{32}{8}
  & \valstd{\mathbf{64}}{10} & \\
  & \texttt{\textbf{OOD}} 
  & \valstd{8}{3} &  \valstd{22}{12}
  & \valstd{5}{5} &  \valstd{24}{13}
  & \valstd{0}{0} & \valstd{34}{6}
  & \valstd{\mathbf{65}}{9} &  \\
\cmidrule{1-10}
\multirow{2}{*}{\makecell{Puzzle}}
& \texttt{\textbf{IND}} 
& \valstd{23}{5} &  \valstd{48}{5}
& \valstd{19}{6} &   \valstd{47}{3}
& \valstd{0}{0} & \valstd{10}{3}
&\valstd{\mathbf{50}}{11}  & \\
& \texttt{\textbf{OOD}} 
& \valstd{0}{0} &  \valstd{11}{9}
& \valstd{0}{0} &  \valstd{12}{11}
& \valstd{0}{0} & \valstd{9}{3}
& \valstd{\mathbf{50}}{13}& \\

\midrule
\multirow{2}{*}{\makecell{Overall}}
  & \texttt{\textbf{IND}} 
  & \valstd{33}{18} & \valstd{57}{15} 
  & \valstd{30}{21}  &  \valstd{56}{16}
  & \valstd{0}{1}  & \valstd{17}{2}
  & \valstd{\mathbf{59}}{17} &  \\
  & \texttt{\textbf{OOD}} 
  & \valstd{2}{4} & \valstd{15}{12} 
  & \valstd{15}{12} & \valstd{15}{13} 
  & \valstd{0}{0}  & \valstd{16}{2}
  & \valstd{\mathbf{54}}{14}  &   \\
\bottomrule
\end{tabular}
}
\caption{
\textbf{Quantitative Results on Compositional Planning Bench.} 
We benchmark our method on the 100 test-time tasks across 4 scenes with 30 episodes per task. 
Our results are averaged over 5 seeds and standard deviations are shown after the ± sign. }
\label{table:main_comp_bench}
\end{table*}

\subsection{Ablation Studies}\label{sec:ablation_studies}
\paragraph{Jointly Synchronous \& Asynchronous Message Passing.}
We compare the success rates of three variants of our test-time compositional sampling scheme in Cube Scene: 
Only Synchronous Loss (\textit{Sync Only}),
Only Asynchronous Loss (\textit{Async Only}),
and Joint Synchronous and Asynchronous Loss (\textit{Sync \& Async}), as shown in Figure \ref{fig:sync_vs_async}. \textit{Sync only} suffers from overly tight constraints that are difficult to optimize, leading to lower success rates. 
In contrast, the asynchronous variant performs better. 
Combining the two---\textit{Sync \& Async}---outperforms either alone, likely due to its more effective balance of constraint enforcement and flexibility.

\paragraph{Scaling in Sampling Steps.}
We study how the number of diffusion sampling steps affects the planning performance on Drawer Scene (Figure~\ref{fig:time_steps}). Success rates improve as the number of steps increases, demonstrating that our method scales effectively with additional test-time compute. We hypothesize that taking more steps enables deeper cross-factor message passing through repeated denoising and guidance updates, which in turn reduces boundary inconsistencies and yields more accurate, temporally coherent plans.



\begin{figure}[htbp]
\hfill
\begin{minipage}[t]{0.45\textwidth}
    \centering
    \includegraphics[width=\linewidth]{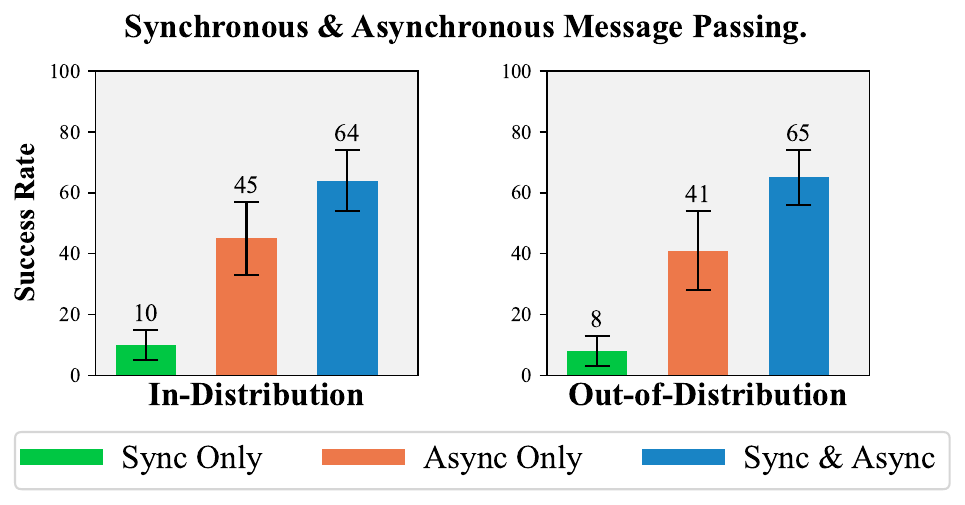}
    \caption{\textbf{Effect of synchronous and asynchronous message passing.}
}
    \label{fig:sync_vs_async}   

\end{minipage}
\hfill
\begin{minipage}[t]{0.53\textwidth}
    \centering
    \includegraphics[width=\linewidth]{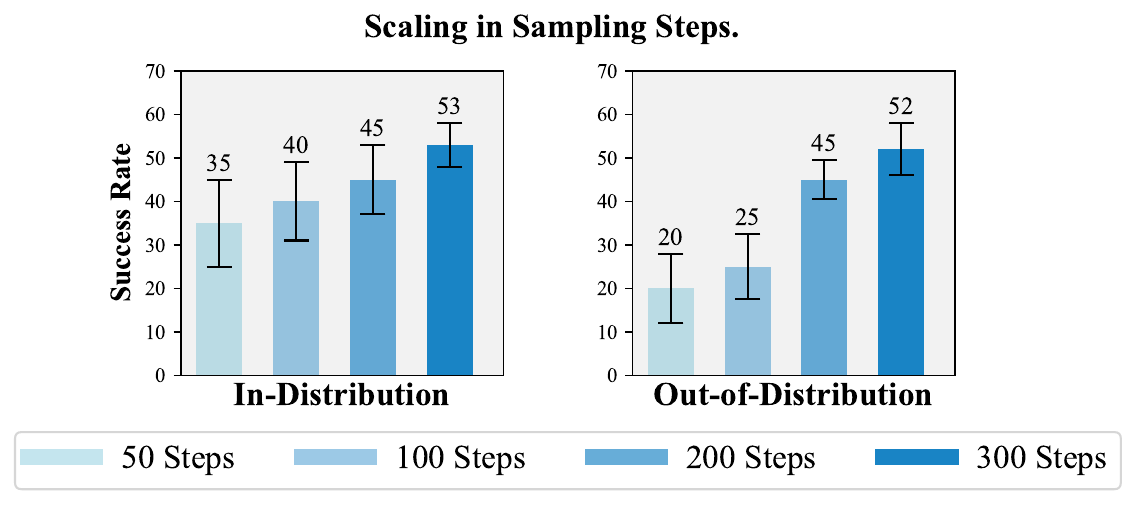}
    \caption{
    \textbf{Effect of sampling steps on planning performance.}
    }
    \label{fig:time_steps}   

\end{minipage}  
\end{figure}

\subsection{Real Robot Experiment}\label{sec:real}
For the real-world experiments, we deploy our method on a Franka Emika Panda robot, controlled at 20Hz using joint impedance control. Visual image
observations are captured using an Intel RealSense D435 depth camera. For data collection, the
robot is teleoperated using a Meta Quest 3 headset, with tracked Cartesian poses converted to
joint configurations through inverse kinematics.
\begin{figure}[h!]
  \centering
  \includegraphics[width=0.5\textwidth]{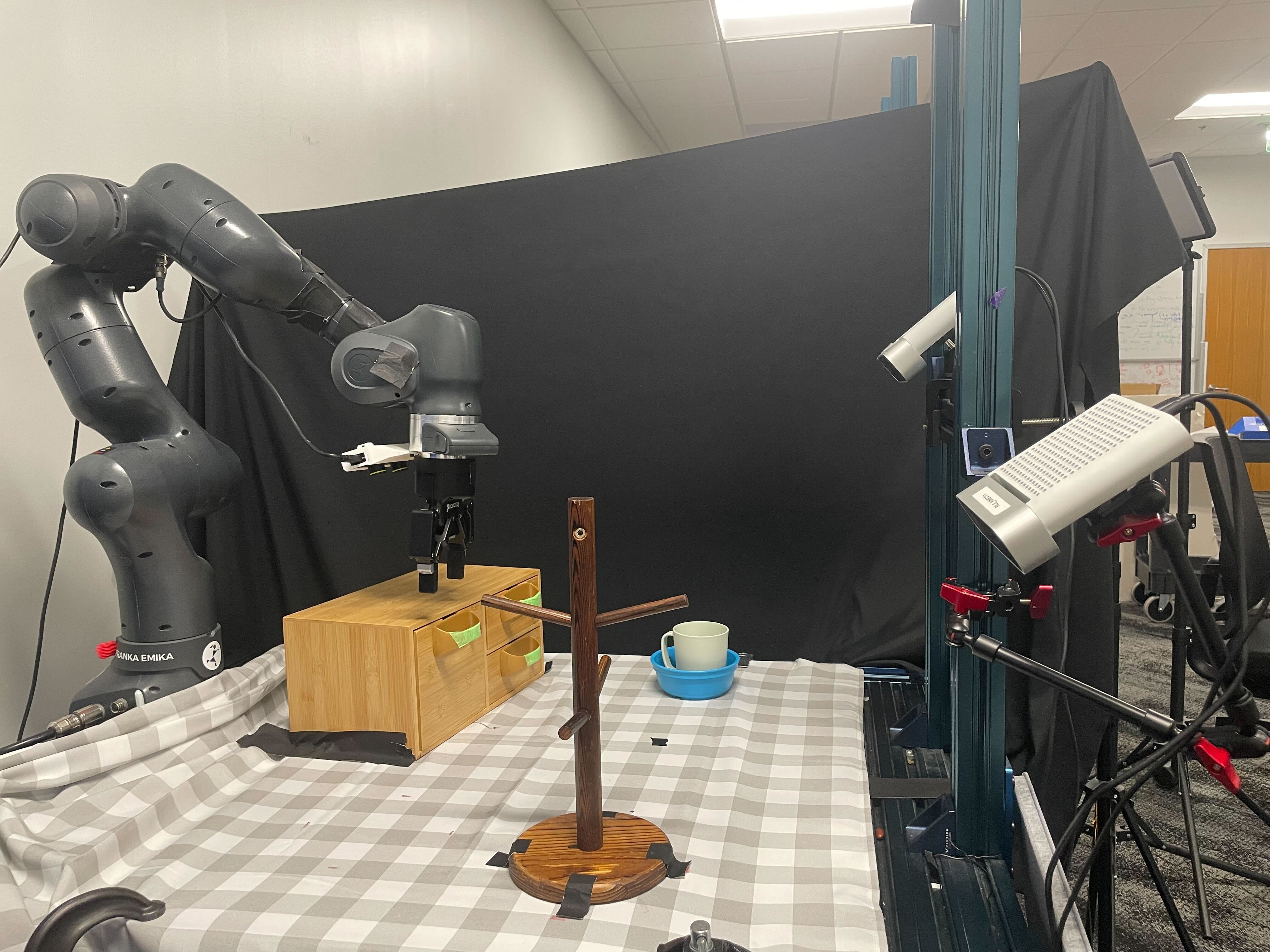}
  \caption{\textbf{Hardware Setup.} We deploy our method on a Franka Emika Panda robot. }
\end{figure} 
\begin{table}[h!]
\centering
\small

\begin{tabular}{lcccc}
\toprule
\textbf{Real Scene} & \textbf{IND:Task1} & \textbf{IND:Task2} & \textbf{OOD:Task3} & \textbf{OOD:Task4} \\
\midrule
DiffCollage & 1/10 & 1/10 & 0/10 & 0/10 \\
Ours        & 9/10 & 7/10 & 10/10 & 8/10 \\
\bottomrule
\end{tabular}
\caption{\textbf{Real-robot success rates.} Our method substantially outperforms DiffCollage across both in-distribution (IND) and out-of-distribution (OOD) tasks on real hardware.}
\label{table:real_robot_results}

\end{table}


\section{Conclusion}
We introduced Compositional Visual Planning, an inference-time method that composes long-horizon plans by stitching overlapping video factors with message passing on Tweedie estimates. A chain-structured factor graph imposes global consistency, enforced via joint synchronous and asynchronous updates, while diffusion-sphere guidance balances alignment and diversity without retraining. Compositional Visual Planning is plug-and-play with short-horizon diffusion video prediction model, scales with test-time compute, and generalizes to unseen start–goal combinations.
Beyond robotics, the framework is applicable to broader domains, such as panorama image generation and long-form text-to-video synthesis, which we leave for future exploration.

\newpage
\section*{Reproducibility Statement}
We promise to provide all the source code to reproduce the results in this paper, including the proposed algorithm and the evaluation benchmark.
\section*{Acknowledgments}
This work was partially supported by the National Science Foundation under Awards NSF-2442393, NSF-2409016, and NSF-1942523, as well as by Samsung Research America. The views and conclusions expressed in this material are those of the authors and do not necessarily reflect the official policies or endorsements of the funding agencies.
We would like to thank Shangzhe Li for helpful early discussion on trajectory stitching and feedback of
the manuscript.
\section*{LLM Usage Disclosure}
We confirm that no large language models (LLMs) were used to generate, edit, or refine the scientific content, experimental design, results, or conclusions of this paper.  
All text, figures, algorithms, and analyses were produced entirely by the authors.



\bibliography{iclr2026_conference}
\bibliographystyle{iclr2026_conference}

\newpage
\appendix
\section*{Appendix}
\section{Noisy-Bethe Gap Theorem}\label{appendix:noisy-bethe}
\NoisyBetheGap*
\begin{proof}
The true noisy distribution is:
\vspace{0.3cm}
\begin{equation}
\begin{aligned}
p(u_t^1, u_t^2, u_t^3) 
&= \int  p(u_t^1, u_t^2, u_t^3, u^1, u^2, u^3)\, du^1 du^2du^3 \\
&= \int p(u^1, u^2 ,u^3) p(u_t^1, u_t^2, u_t^3|u^1, u^2 ,u^3)\, du^1 du^2du^3\\
&= \int \frac{p(u^1, u^2) p(u_t^1, u_t^2 \,|\,u^1, u^2)
p(u^2, u^3)p(u_t^2, u_t^3 \,|\,u^2, u^3)}{p(u^2)p(u_t^2|u^2)}\, du^1 du^2du^3\\
&= \int \frac{
\int p(u^1, u^2) p(u_t^1, u_t^2 \,|\,u^1, u^2)du^1
\int p(u^2, u^3)p(u_t^2, u_t^3 \,|\,u^2, u^3)du^3}{p(u^2)p(u_t^2|u^2)}\, du^2
\end{aligned}
\end{equation}

The estimator used in~\citep{Diffcollege,mishra2023generative,mishra2024generative} is:

\begin{equation}
\begin{aligned}
\hat{p}(u_t^1, u_t^2, u_t^3) 
&= \frac{
\int p(u^1, u^2) p(u_t^1, u_t^2 \,|\,u^1, u^2) \, du^1du^2 \int p(u^2, u^3) p(u_t^2, u_t^3 \,|\,u^2, u^3) \, du^2du^3 }{\int p(u^2) p(u_t^2\,|\,u^2) \, du^2}
\end{aligned}
\end{equation}

Define the left-factor message into $u^2$ as
$a(u^2)= \int p(u^1, u^2) p(u_t^1, u_t^2 \,|\,u^1, u^2) \, du^1$,
the right-factor message into $u^2$ as
$b(u^2)= \int p(u^2, u^3) p(u_t^2, u_t^3 \,|\,u^2, u^3) \, du^3$,
and the local boundary evidence $c(u^2)=p(u^2)p(u_t^2\,|\,u^2)$.
Then $p(u_t^1, u_t^2, u_t^3) = \int \frac{ab}{c}du^2$
and $\hat{p}(u_t^1, u_t^2, u_t^3) = \frac{\int a \,du^2 \int b\,du^2}{\int c\,du^2}$.

Introduce a change of measure by setting $q(u^2)=\frac{c}{Z}$ with $Z=\int c\,du^2$.
For the true distribution, we have:

\begin{equation}
\begin{aligned}
&p(u_t^1, u_t^2, u_t^3)  \\
&=  \int \frac{
\int p(u^1, u^2) p(u_t^1, u_t^2 \,|\,u^1, u^2)du^1
\int p(u^2, u^3)p(u_t^2, u_t^3 \,|\,u^2, u^3)du^3}{p(u^2)p(u_t^2|u^2)} 
\frac{p(u^2)p(u_t^2|u^2)}{p(u^2)p(u_t^2|u^2)}
\frac{Z}{Z}\, du^2\\
&=Z \, \int \frac{\int p(u^1, u^2) p(u_t^1, u_t^2 \,|\,u^1, u^2)du^1}{p(u^2)p(u_t^2|u^2)}
\frac{\int p(u^2, u^3)p(u_t^2, u_t^3 \,|\,u^2, u^3)du^3}{p(u^2)p(u_t^2|u^2)}
\frac{p(u^2)p(u_t^2|u^2)}{Z}du^2\\
&= Z\, \mathbb{E}_{u\sim q(u^2)}[\tfrac{a}{c} \tfrac{b}{c}]
\end{aligned}
\end{equation}

For the estimator, observe that:

\begin{equation}
\begin{aligned}
\int a(u^2) \,du^2&= \int (\int p(u^1, u^2) p(u_t^1, u_t^2 \,|\,u^1, u^2) \, du^1)\, du^2\\
&= \int (\int p(u^1, u^2) p(u_t^1, u_t^2 \,|\,u^1, u^2) \, du^1)\, 
\frac{p(u_2)p(u_t|u_2)}{p(u_2)p(u_t|u_2)} 
\frac{Z}{Z}du^2\\
&= Z \int 
\frac{(\int p(u^1, u^2) p(u_t^1, u_t^2 \,|\,u^1, u^2) \, du^1)}{p(u_2)p(u_t|u_2)} \, 
\frac{p(u_2)p(u_t|u_2)}{Z}du^2 \\
&= Z \, \mathbb{E}_{u^2 \sim q(u^2)}[\tfrac{a}{c}]
\end{aligned}
\end{equation}
\newpage
By the same argument, $\int b(u^2)du^2 =Z\, \mathbb{E}_{u^2\sim q(u^2)}[\tfrac{b}{c}]$, therefore the estimated distribution:

\begin{equation}
\hat{p}(u_t^1, u_t^2, u_t^3) = Z \, \mathbb{E}_{u^2 \sim q(u^2)}[\tfrac{a}{c}]\mathbb{E}_{u^2\sim q(u^2)}[\tfrac{b}{c}].
\end{equation}

Finally, the difference between the true and estimated distributions is:

\begin{equation}
\begin{aligned}
\Delta &= p(u_t^1, u_t^2, u_t^3) - \hat{p}(u_t^1, u_t^2, u_t^3) \\
&= Z\, \mathbb{E}_{u\sim q(u^2)}[\tfrac{a}{c} \tfrac{b}{c}] - Z \, \mathbb{E}_{u^2 \sim q(u^2)}[\tfrac{a}{c}]\mathbb{E}_{u^2\sim q(u^2)}[\tfrac{b}{c}]\\
&= Z \, \mathrm{Cov}[\tfrac{a}{c},\tfrac{b}{c}].
\end{aligned}
\end{equation}

This shows that the estimator departs from the true distribution by a covariance term under the reweighted boundary measure $q(u^2)$, scaled by $Z$.
\end{proof}

\section{Synchronous Message Passing}\label{appendix:sync_mp}
\SyncMP*
\textit{Proof.}\: $p_{sync}( x_{1:n} \,| \, s, \,g)$ can be represented as a product of dependencies over all boundary variables:
\vspace{0.1cm}
\begin{equation}
p( x^{1:n} \,| \,s, \,g)_{sync} \:=\: \psi_{s, 1} (s, x_1) \, \psi_{1,2} (x_1, x_2) \,\cdots\, \psi_{n-1}(x_{n-1}, x_n) \, \psi_{n,g}(x_n, g).
\end{equation}

The aim of this equation is to express the joint distribution over all intermediate states, given initial and final state:

\begin{equation}
\begin{aligned}
p( x^{1:n} \,| \,s, \,g)
&\:=\: 
\psi_0(s,x_1)\,\psi_1(x_1,x_2)\,\cdots\,\psi_{n-1}(x_{n-1},x_n)\,\psi_n(x_n,g)\\
&\:\propto\:
\exp\Bigl(\:
-\frac{1}{c_0}\normltwosq{s-A_1x_1}
-\frac{1}{c_1}\normltwosq{B_1x_1-A_2x_2}
-\frac{1}{c_2}\normltwosq{B_2x_2-A_3x_3}
\\
&\hphantom{\:\propto \:}
-\:\cdots\:
-\frac{1}{c_{n-1}}\normltwosq{B_{n-1}x_{n-1}-A_nx_n}
-\frac{1}{c_n}\normltwosq{B_nx_n-g}
\:\Bigr)\\
&\:=\:
\exp\Bigl(\:
- \halfbilin{\frac{I}{c_0}}{s}{s}
\,+\, \halfbilin{\frac{A_1}{c_0}}{s}{x_1}
\,+\, \halfbilin{\frac{A_1}{c_0}}{x_1}{s}
\,-\, \halfbilin{\frac{A_1^TA_1}{c_0}}{x_1}{x_1}
\\
&\hphantom{\:= \:}
- \halfbilin{\frac{B_1^TB_1}{c_1}}{x_1}{x_1}
\,+\, \halfbilin{\frac{B_1^TA_2}{c_1}}{x_1}{x_2}
\,+\, \halfbilin{\frac{A_2^TB_1}{c_1}}{x_2}{x_1}
\,-\, \halfbilin{\frac{A_2^TA_2}{c_1}}{x_2}{x_2}
\\
&\hphantom{\:= \:}
- \halfbilin{\frac{B_2^TB_2}{c_2}}{x_2}{x_2}
\,+\, \halfbilin{\frac{B_2^TA_3}{c_2}}{x_2}{x_3}
\,+\, \halfbilin{\frac{A_3^TB_2}{c_2}}{x_3}{x_2}
\,-\, \halfbilin{\frac{A_3^TA_3}{c_2}}{x_3}{x_3}
\\
&\hphantom{\:= \:}
- \halfbilin{\frac{B_3^TB_3}{c_3}}{x_3}{x_3}
\,+\, \halfbilin{\frac{B_3^TA_4}{c_3}}{x_3}{x_4}
\,+\, \halfbilin{\frac{A_4^TB_3}{c_3}}{x_4}{x_3}
\,-\, \halfbilin{\frac{A_4^TA_4}{c_3}}{x_4}{x_4}
\\
&\hphantom{- \halfbilin{\frac{B_2^TB_2}{c_2}}{x_2}{x_2}
\,+\, \halfbilin{\frac{B_2^TA_3}{c_2}}{x_2}{x_3}
\,+\, }
\vdots\\
&\hphantom{\:= \:}
- \halfbilin{\frac{B_n^TB_n}{c_n}}{x_n}{x_n}
\,+\, \halfbilin{\frac{B_n^T}{c_n}}{x_n}{g}
\,+\, \halfbilin{\frac{B_n}{c_n}}{g}{x_n}
\,-\, \halfbilin{\frac{I}{c_n}}{g}{g}
\:\Bigr)\\
&\:=\: 
\exp \,( \,- \frac{1}{2} x_{1:n}^T \Sigma^{-1} x_{1:n} \,+\, \eta^T  x_{1:n}\,),\\
\end{aligned}
\end{equation}

\vspace{0.2cm}
where
$\;\Sigma^{-1} =
\begin{bmatrix}
\frac{A_1^TA_1}{c_0} + \frac{B_1^TB_1}{c_1} & -\frac{B_1^TA_2}{c_1}& & \\
-\frac{A_2^TB_1}{c_1} & \frac{A_2^TA_2}{c_1} + \frac{B_2^TB_2}{c_2}& 
-\frac{B_2^TA_3}{c_2}& \\
 & -\frac{A_3^TB_2}{c_2} & \frac{A_3^TA_3}{c_2} + \frac{B_3^TB_3}{c_3}&\\
& & & \ddots  
\end{bmatrix}$,
$\eta =
\begin{bmatrix*}
\frac{A_1^Ts}{c_0}\\
0\\
0\\
\vdots\\
\frac{B_n^Tg}{c_n}
\end{bmatrix*}$.




\section{Limitations} 
Our method has several limitations. 
First, it relies on the accuracy of the estimated clean data (Tweedie estimates) during denoising, since guidance losses are computed on these estimates. This sensitivity could be mitigated by performing multi-step Tweedie estimation or by scaling up training data and model capacity. Second, as in prior work, the number of test-time composed segments, $n$, must be specified manually. Developing procedures that automatically infer $n$ from task structure and uncertainty would be an interesting future research direction.
Lastly, our approach can be more computationally demanding than direct averaging-based sampling, because test-time guidance is implemented via gradient-based optimization(Table~\ref{tab:sampling-time}). Exploring lighter optimization schedules could potentially reduce this overhead.

\section{Baseline Implementations}\label{appendix:baselines}
To ensure clarity about how the prior baselines are instantiated in our setting, we briefly summarize the exact formulations of all baselines below. We'd like to clarify DiffCollage / GSC/ CompDiffuser are joint denoising over the full long-horizon trajectory, starting from a single long-horizon Gaussian. 

\paragraph{Language-Conditioned Behavioral Cloning (LCBC).}
The policy uses a T5 text encoder to embed natural-language instructions. 
We concatenate the text embedding with image features extracted by a ResNet backbone, and feed the result into an MLP policy head. 
The model is trained in a supervised manner to predict a single action at each timestep, conditioned on both the language input and the current observation.

\paragraph{Language-Conditioned Diffusion Policy (LCDP).}
LCDP follows the same text encoding pipeline as LCBC with a T5 encoder, but replaces the MLP head with a Transformer-based policy head. 
The Transformer generates chunks of actions rather than single-step predictions, allowing multi-step reasoning conditioned on language and observations.

\paragraph{Goal-Conditioned Behavioral Cloning (GCBC).}
GCBC uses a ResNet backbone to encode both the current observation image and the goal image. 
The concatenated features are passed through an MLP policy head, which outputs a single action. 
This provides a goal-aware baseline without language conditioning.

\paragraph{Goal-Conditioned Diffusion Policy (GCDP).}
GCDP employs a ResNet backbone with a Transformer policy head, conditioned jointly on the current observation and the goal image. 
The model outputs action chunks, enabling multi-step planning toward the goal state.

\paragraph{DiffCollage / GSC.} DiffCollage is a compositional test time generation method, and GSC refers to its adaptation for robotic planning. 
Given a start image and a goal image, the model directly generates an entire visual plan (a sequence of intermediate frames). It composes a long-horizon score following Eq.~\ref{eq:noisy_score} and iteratively denoise the entire long trajectory.
We then employ a separately trained inverse dynamics model to convert the visual plan into executable robot actions.

\paragraph{CompDiffuser.} CompDiffuser is a trained joint-denoising baseline: it trains the short-horizon model to condition on its preceding and following noisy chunks. At test time, it subdivides the long trajectory into overlapping chunks, and denoises them jointly while conditioning each chunk on its neighbors. We then employ a separately trained inverse dynamics model to convert the visual plan into executable robot actions. 
CompDiffuser is SOTA joint-denoising baseline on OGBench 2D xy point maze navigation; however, its behavior on high-dimensional visual planning had not been evaluated prior to our experiments.

\section{Compositional Planning Tasks}\label{appendix:benchmark_tasks}
\subsection{Dataset and Simulation Setup}
\paragraph{Assets.}
Our asset library combines 3D models and textures from ShapeNet~\citep{chang2015shapenetinformationrich3dmodel} and RoboCasa~\citep{robocasa2024}. 
We additionally apply simple high-quality texture(e.g., wood, plastic, metal finishes) to increase visual fidelity. 
All simulations are conducted in the SAPIEN engine~\citep{Xiang_2020_SAPIEN}, which provides high-fidelity physics and rendering for robotic manipulation.

\paragraph{State and Action Space.}
The observation space consists solely of RGB images with resolution $256 \times 256 \times 3$, without access to privileged information such as depth or ground-truth states. 
The action space is parameterized by the end-effector position (3D Cartesian coordinates), orientation represented as a quaternion (4D), and a binary scalar controlling the gripper open/close state. 
During initialization, all object poses are randomized within a $0.2\,$m radius from their nominal positions, ensuring sufficient variability and out-of-distribution test cases.

\paragraph{Demonstrations.}
We provide 300 expert demonstrations for each of the $N$ start–goal combinations across all scenes, resulting in thousands of trajectories spanning tool-use, drawer manipulation, cube rearrangement, and puzzle solving. 
Each demonstration is generated via scripted policies that guarantee feasibility and success. 
For the LCBC and LCDP baselines, we further annotate each demonstration with natural-language descriptions of the task. 
Tese annotations also help evaluate how well language-conditioned models can generalize across start–goal variations.

\paragraph{Success Condition.}
A rollout is considered successful if all objects reach their target configurations within a predefined spatial threshold.

\subsection{Tool}
The Tool scene (Figure ~\ref{fig:tool}) requires the robot to manipulate a tool in order to push the cube to the target location. Success is achieved when the cube reaches the designated target area within a fixed distance threshold. Direct manipulation is not possible, so the robot must use the provided tool to accomplish the task.
\begin{figure}[htbp]
  \centering
  \includegraphics[width=\linewidth]{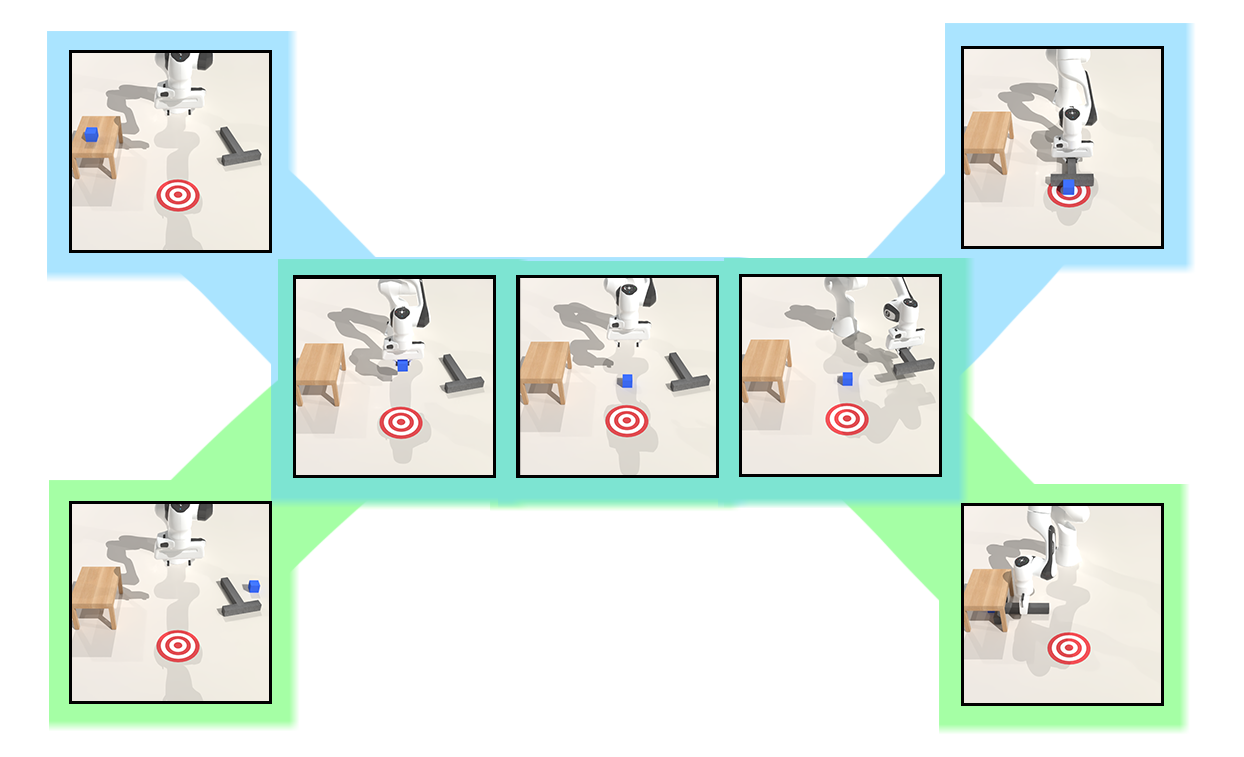}
  \caption{\textbf{Visualization of Tool Scene.}}
  \label{fig:tool}
\end{figure}
\subsection{Drawer}
The Drawer scene (Figure ~\ref{fig:drawer}) requires the robot to manipulate drawers into closed states and use the brush to draw on the canvas. 
Depending on the start state, the brush may be located in different spaces, and the robot must go to the correct space and retrieve it. 
After grasping the brush, the drawer must often be closed again before drawing, to avoid collision between the arm and the drawer. 

\begin{figure}[htbp]
  \centering
  \includegraphics[width=\linewidth]{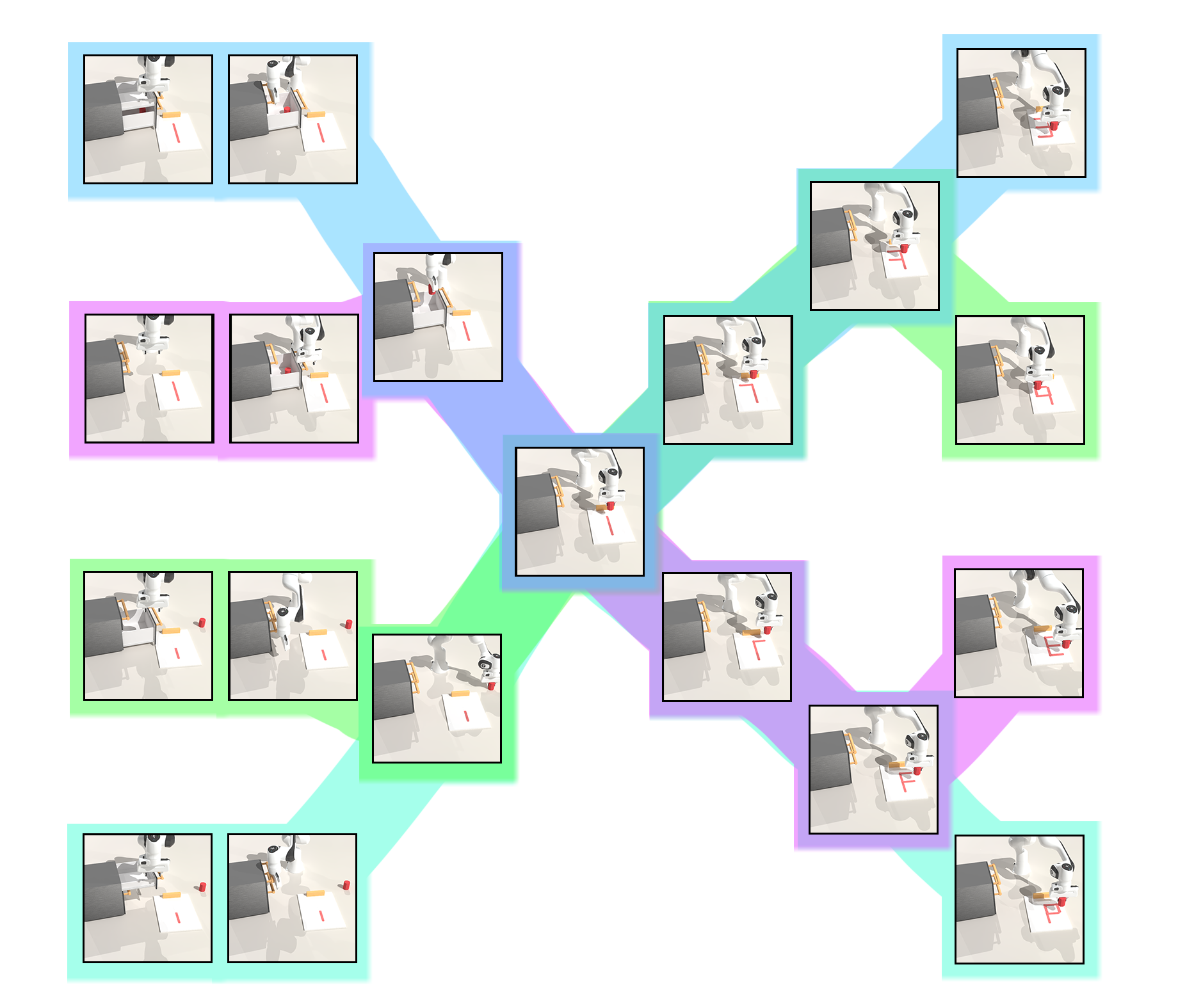}
  \caption{\textbf{Visualization of Drawer Scene.}}
  \label{fig:drawer}
\end{figure}
\subsection{Cube}
The Cube scene (Figure ~\ref{fig:cube}) requires the robot to manipulate multiple colored cubes, first arranging them into a prescribed order and then placing each cube into its designated goal region. This task evaluates the planner’s ability to identify and distinguish between object colors, maintain the correct ordering, and execute precise placement into multiple targets. 
\begin{figure}[htbp]
  \centering
  \includegraphics[width=0.7\linewidth]{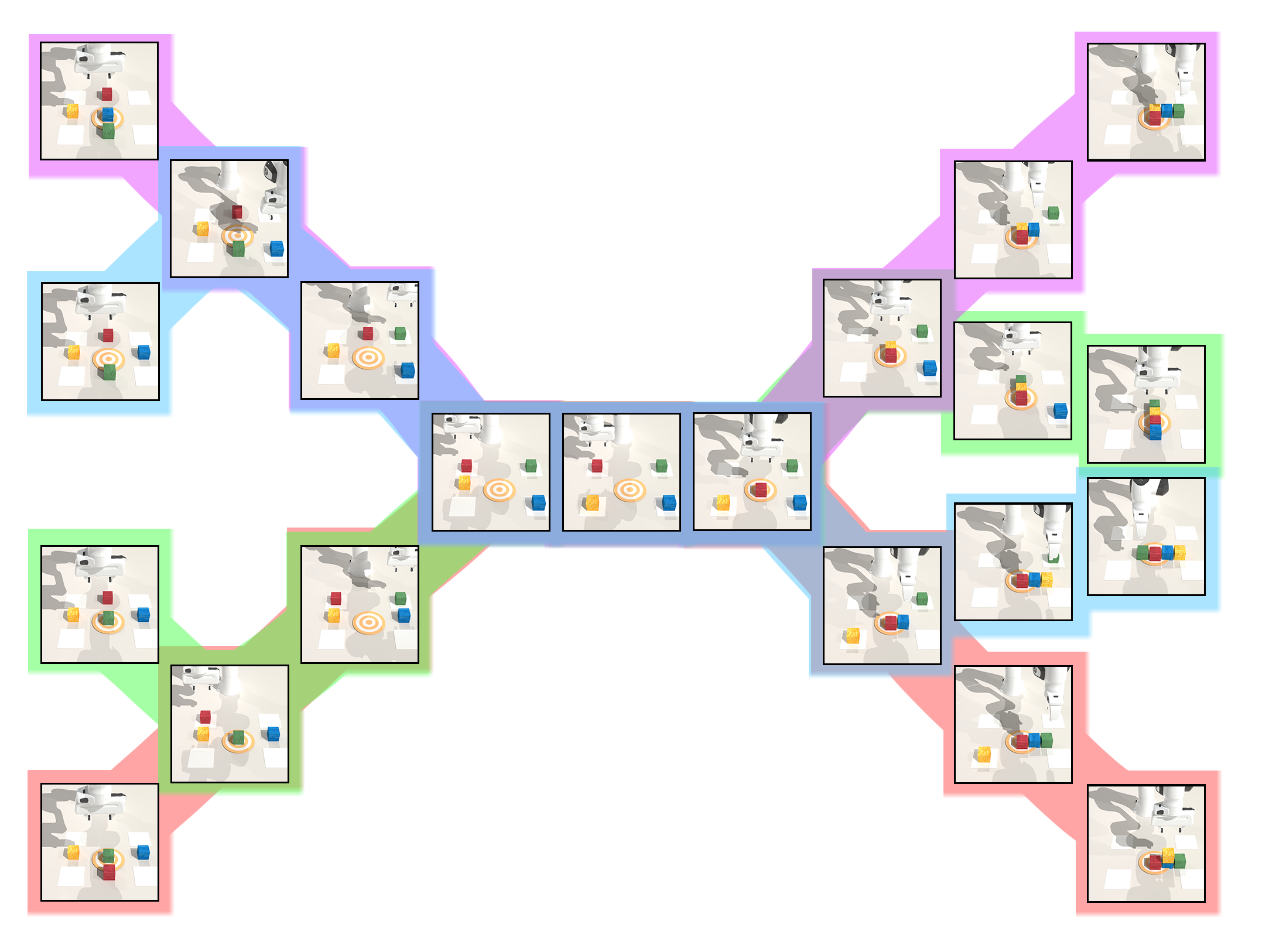}
  \caption{\textbf{Visualization of Cube Scene.}}
  \label{fig:cube}
\end{figure}
\subsection{Puzzle}
The Puzzle scene (Figure ~\ref{fig:puzzle}) poses the most challenging test of compositional planning. 
The robot must first arrange multiple colored blocks into a specific intermediate configuration, and then place them into distinct goal slots. 
This requires not only accurate object manipulation and ordering, but also the ability to chain together multiple sub-tasks that were only observed in isolation during demonstrations. 
\begin{figure}[htbp]
  \centering
  \includegraphics[width=0.75\linewidth]{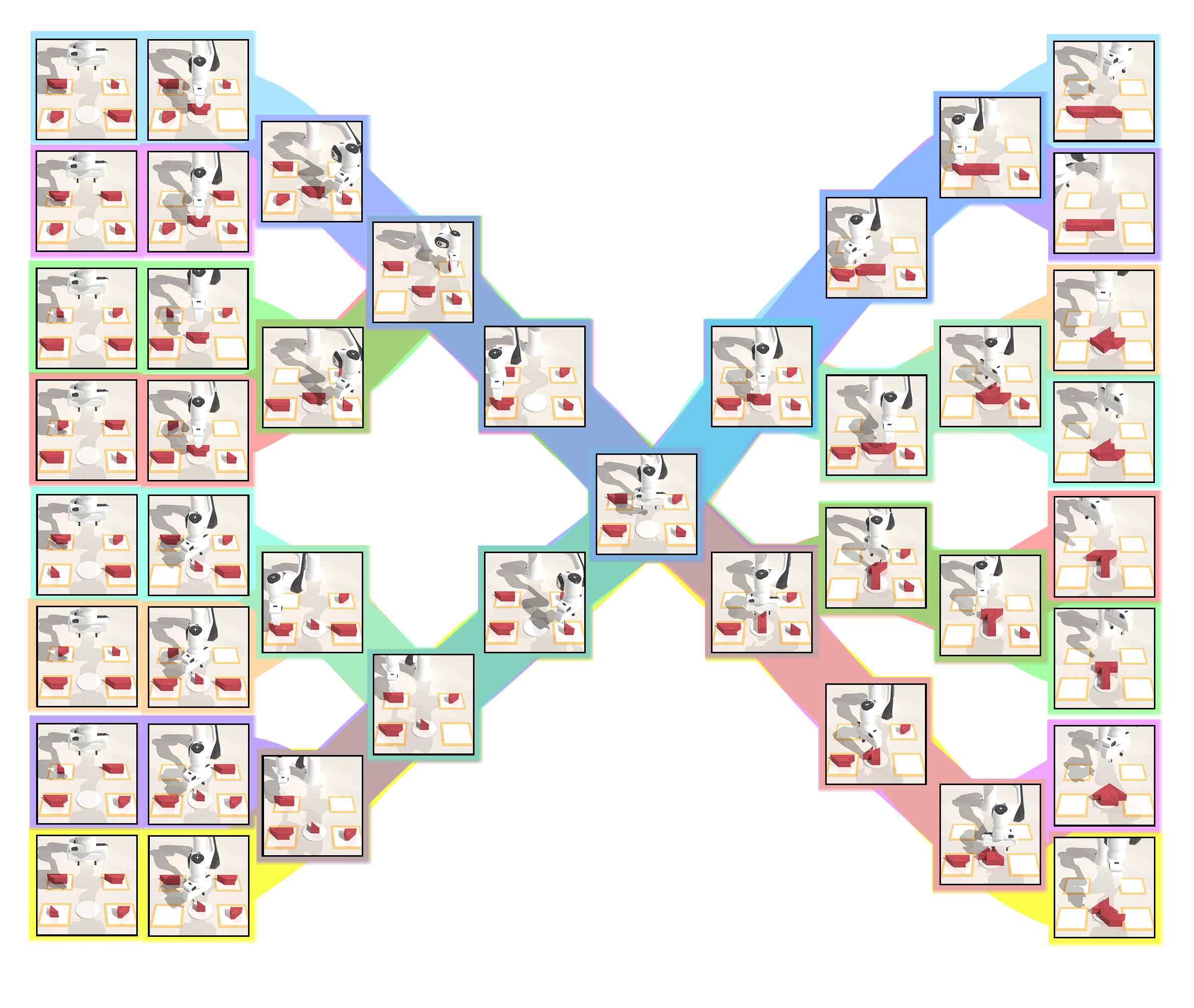}
  \caption{\textbf{Visualization of Puzzle Scene.}}
  \label{fig:puzzle}
\end{figure}

\section{Implementation Details}
\paragraph{Software.} 
All experiments are conducted on Ubuntu 20.04.6 with Python~3.10 and PyTorch~2.2.1. 

\paragraph{Training.} 
Models are trained on NVIDIA H200 GPUs. 

\paragraph{Deployment Hardware.} 
For deployment, we use a single NVIDIA L40S GPU. 

\paragraph{Model Inputs and Outputs.} 
All observations are first encoded into tokens using the Cosmos tokenizer. 
We adopt DiT backbones for video generation, using DiT-L or DiT-XL~\citep{peebles2023scalablediffusionmodelstransformers} depending on the scene. 
Video generation is performed entirely in the token space. 
For control, we employ a simple MLP-based inverse dynamics model, which predicts low-level actions conditioned on consecutive frames. 

\paragraph{Hyperparameters.} 
We report all hyperparameters used during both training and inference for full transparency. For fairness and reproducibility, we do not perform any hyperparameter search or use a learning rate scheduler; all experiments are conducted with fixed values throughout. This ensures that performance gains arise from the method itself rather than extensive hyperparameter tuning. We also keep hyperparameters consistent across different tasks and scenes, unless otherwise specified, to highlight the robustness of our approach.
\begin{table}[htbp]
\centering
\footnotesize
\setlength{\tabcolsep}{6pt}
\renewcommand{\arraystretch}{1.15}
\begin{tabular}{l c}
\toprule
\textbf{Hyperparameter} & \textbf{Value} \\
\midrule
Diffusion Time Step & 500 \\
Batch Size          & 512 \\
Optimizer           & Adam \\
Learning Rate       & $1 \times 10^{-4}$ \\
Iterations          & 1M \\
Discount Factor $\gamma$ & 0.6 \\
Sampling Time Steps  & 300 \\
Guidance Weight $g_r$ & 0.6\\
\bottomrule
\end{tabular}
\caption{\textbf{Relevant hyperparameters} used in our experiments.}
\label{tab:hyperparams}
\end{table}

\section{Deployment Time Study}
\subsection{Time Study \& NFEs}
All results in Table~\ref{tab:sampling-time} are reported under the same setting of $300$ DDIM steps. 
Compared to DiffCollage, our method incurs higher wall-clock time because it requires test-time backpropagation through the diffusion model in order to enforce consistency via message passing. 
This extra computation accounts for the increase in sampling time, but is necessary to achieve the significant gains in success rates reported in the main results. Our implementation is fully batched, so all factors require only a single forward pass at each diffusion steps. We follow the standard convention of counting only forward passes of the diffusion model as NFEs. At the same DDIM step count, our method requires twice the NFEs because it performs two forward passes per step. The backward update does not increase NFEs, but it does add a small amount of extra wall-clock time. 
\begin{table}[htbp]
\centering
\footnotesize
\setlength{\tabcolsep}{6pt}
\renewcommand{\arraystretch}{1.15}
\begin{tabular}{l c cc}
\toprule
\textbf{Scene} & \textbf{\# Models Composed} & \multicolumn{2}{c}{\textbf{Sampling Time (s)} $\downarrow$} \\
\cmidrule(lr){3-4}
& & \textbf{DiffCollage} & \textbf{Ours} \\
\midrule
Tool-Use & 3 & 7.1  & 17.2 \\
Drawer   & 5 & 18.9 & 30.4 \\
Cube     & 5 & 21.1 & 31.7 \\
Puzzle   & 6 & 30.4 & 61.8 \\
\bottomrule
\end{tabular}
\caption{\textbf{Sampling time during deployment.} This is measured as the mean wall-clock time across all samples within a single scene.}
\label{tab:sampling-time}
\end{table}

\newpage
\section{Real World Qualitative Results}
\subsection{In Distribution Evaluation}
\begin{figure}[htbp!]
  \centering
  \includegraphics[width=0.99\linewidth]{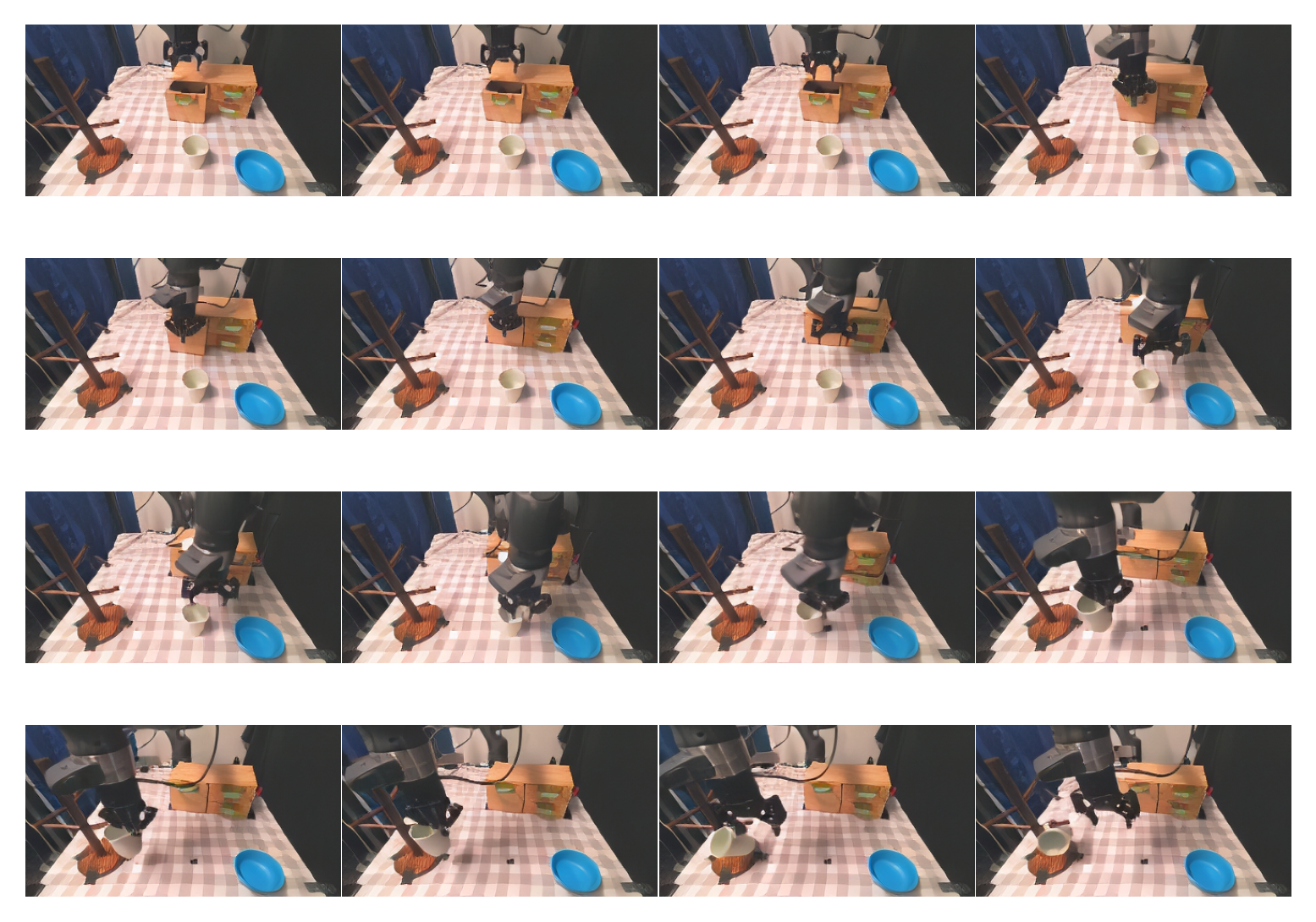}
  \caption{\textbf{Task1 Synthetic Plan.}}
\end{figure}
\begin{figure}[htbp]
  \centering
  \includegraphics[width=0.99\linewidth]{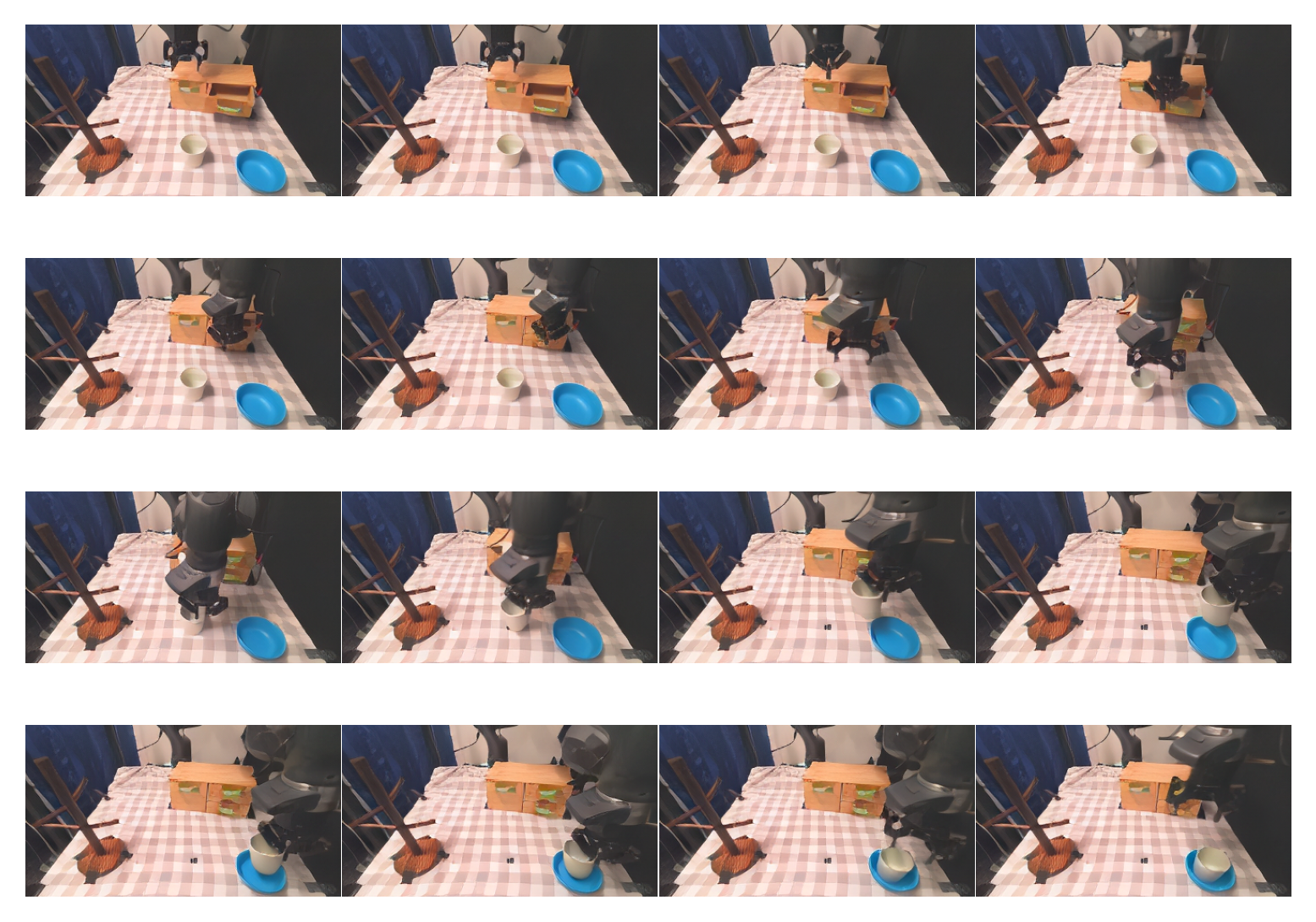}
  \caption{\textbf{Task2 Synthetic Plan.}}
\end{figure}
\newpage
\subsection{Out of Distribution Evaluation}
\begin{figure}[htbp!]
  \centering
  \includegraphics[width=0.99\linewidth]{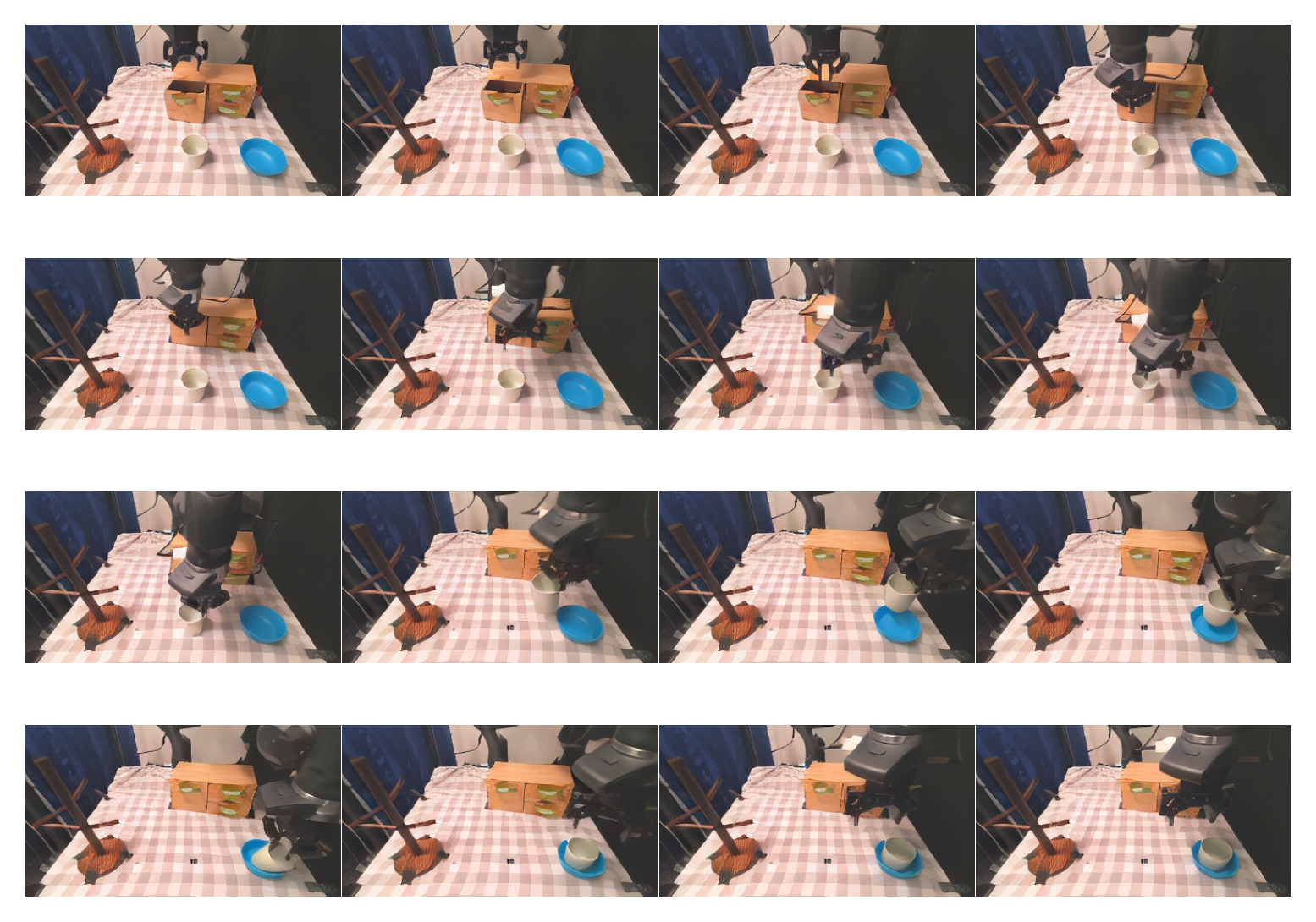}
  \caption{\textbf{Task3 Synthetic Plan.}}
\end{figure}
\begin{figure}[htbp]
  \centering
  \includegraphics[width=0.99\linewidth]{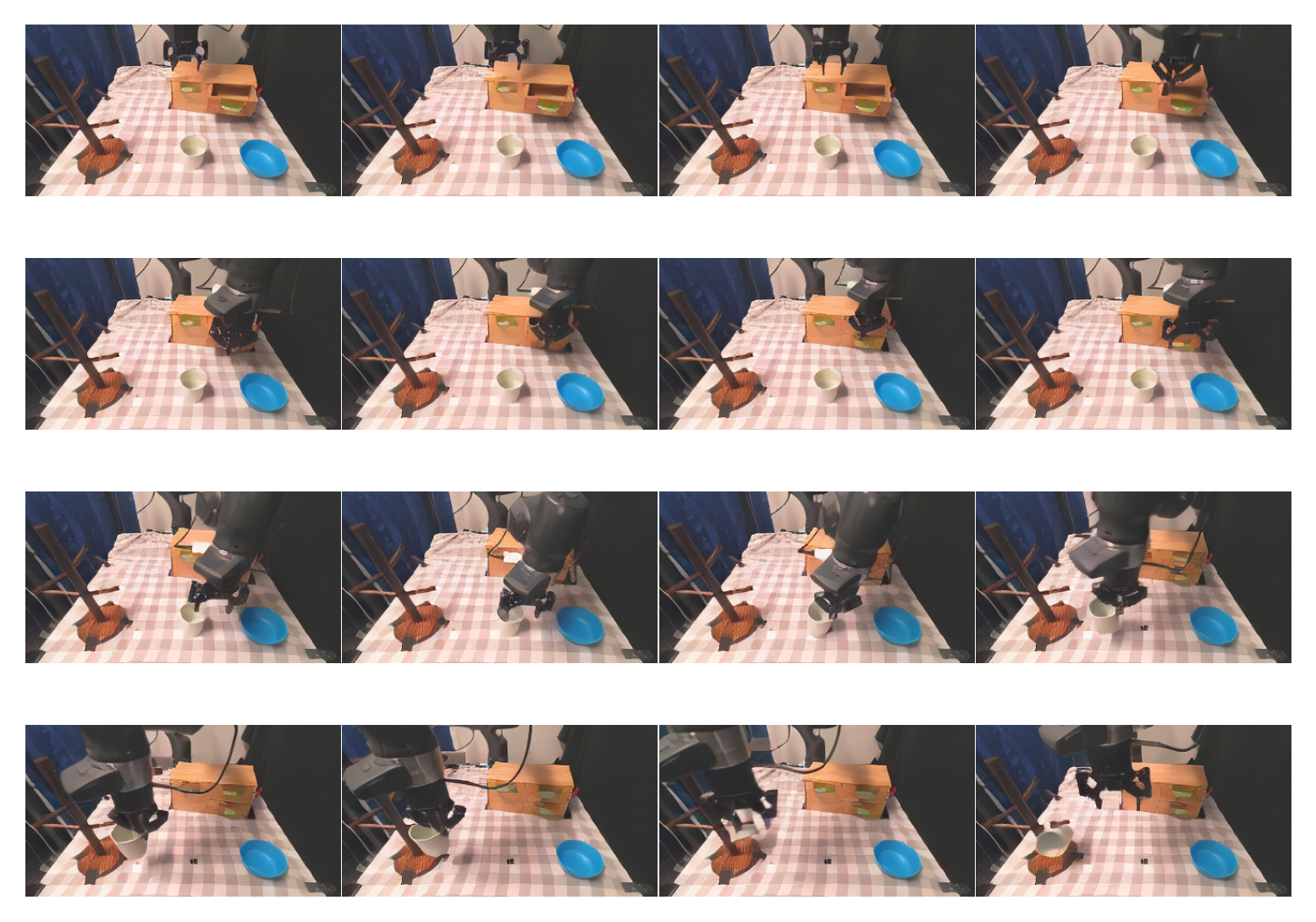}
  \caption{\textbf{Task4 Synthetic Plan.}}
\end{figure}
\newpage

\section{Failure Case of Synchronous Message Passing}

In our experiments, removing the discount factor does not cause plans to become globally incoherent, the generated videos remain smooth, and the intermediate states are generally consistent. The main failure mode is more subtle: the generated start and goal frames remain semantically correct but exhibit small spatial misalignments relative to the precise test-time requirement. This misalignment leads to plans that are coherent in motion but anchored to an incorrect spatial configuration.

For example, in our failure cases 1 and 2, the robot performs a reasonable picking motion for the hook, yet the entire sequence is centered around the wrong object location: the gray tool is consistently generated with a small but noticeable orientation offset from its true pose required by start. We hypothesize that without the discount factor, the guidance signals from the start and goal conditions are not sufficiently emphasized, allowing this spatial drift to persist throughout the whole plan.
\begin{figure}[htbp!]
  \centering
  \includegraphics[width=\linewidth]{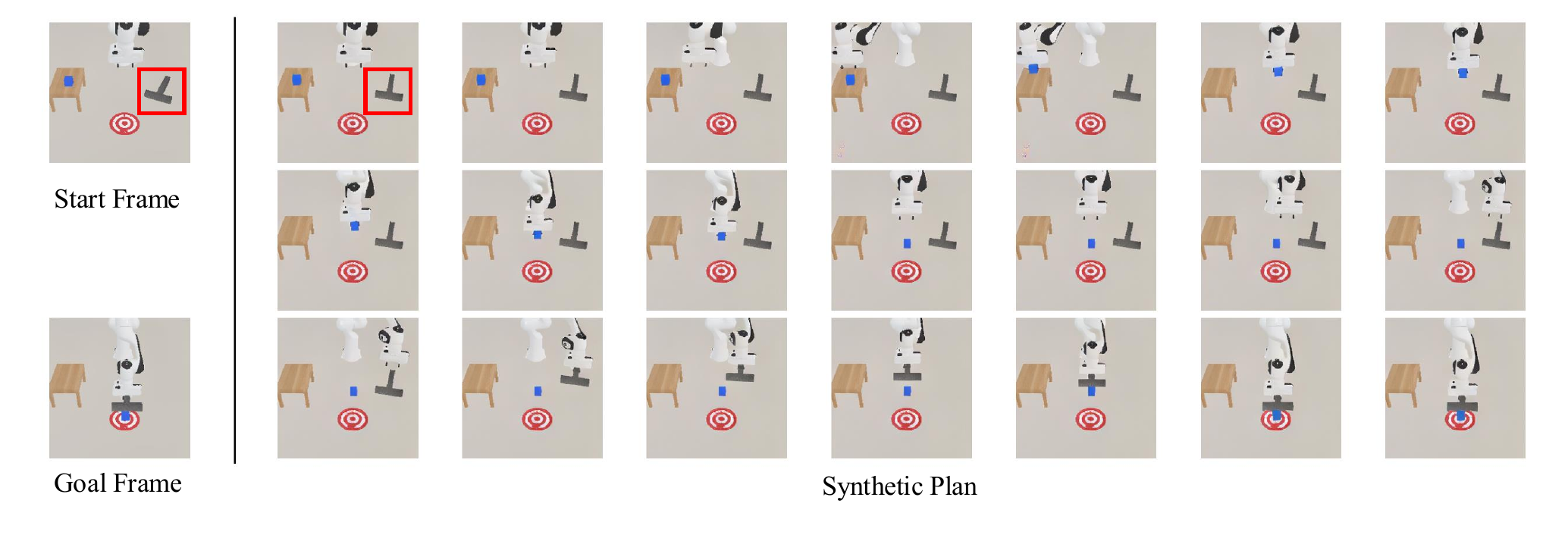}
  \caption{\textbf{Failure Mode of Synchronous Message Passing: Case 1.}}
\end{figure}
\begin{figure}[htbp]
  \centering
  \includegraphics[width=\linewidth]{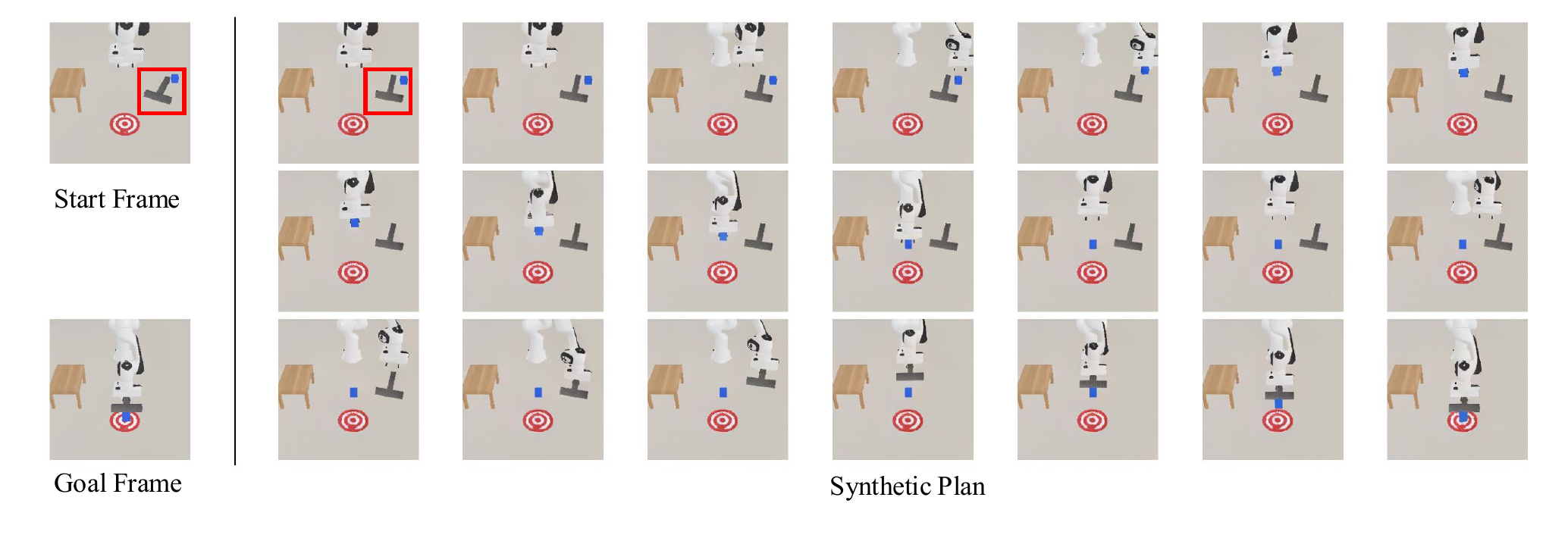}
  \caption{\textbf{Failure Mode of Synchronous Message Passing: Case 2.}}
\end{figure}


\end{document}